\title{Feature space sketching for logistic regression}
\author{
Gregory Dexter%
\footnote{Department of Computer Science, Purdue University,
West Lafayette, IN, USA, \texttt{\{gdexter, rajivak, jraheel, pdrineas\}@purdue.edu}}
\and
Rajiv Khanna%
\footnotemark[1]
\and
Jawad Raheel%
\footnotemark[1]
\and
Petros Drineas%
\footnotemark[1]
}
\begin{document}
\date{}
\maketitle

\begin{abstract}
We present novel bounds for coreset construction, feature selection, and dimensionality reduction for logistic regression. All three approaches can be thought of as \textit{sketching} the logistic regression inputs. On the coreset construction front, we resolve open problems from prior work and present novel bounds for the complexity of coreset construction methods. On the feature selection and dimensionality reduction front, we initiate the study of forward error bounds for logistic regression. Our bounds are tight up to constant factors and our forward error bounds can be extended to Generalized Linear Models. 
\end{abstract}

\section{Introduction}

Logistic regression is an indispensable tool in data analysis, dating back to at least the early 19th century. It was initially used to make predictions in social science and chemistry applications~\cite{PFV1838,PFV1845,Cramer2003}. Over the past 200 years it has been applied to all data-driven scientific domains, from economics and the social sciences to physics, medicine, and biology. At a high level, the (binary) logistic regression model is a statistical abstraction that models the probability of one of two alternatives or classes by expressing the log-odds (the logarithm of the odds) for the class as a linear combination of one or more predictor variables. 

Formally, consider the following regularized regression problem:
\begin{align}
    \beta_d &= \argmin_{\beta \in \R^d}\Lcal(\beta)  
    = \argmin_{\beta \in \R^d} \sum_{i=1}^n \log(1 + e^{-\yb_i\xb_i^T\beta}) + \frac{\lambda}{2}\|\beta\|_2^2,\label{eqn:blrr}
\end{align}
where $\Xb \in \R^{n \times d}$ is the data matrix ($n$ points in $\R^d$, with $\xb_i^T$ being the rows of $\Xb$); $\yb \in \{-1,1\}^n$ is the vector of labels whose entries are the $\yb_i$; and $\lambda > 0$ is the regularization parameter.

Over the past 25 years, matrix sketching and Randomized Linear Algebra~\cite{Drineas2016,Drineas2017,Halko2011,KannanV17,Martinsson2020} have advocated the use of sketching and sampling techniques to \textit{reduce} the dimensionality of the input matrix $\Xb$ (and the response vector $\yb$) in a variety of optimization problems in order to reduce running times and improve interpretability with a bounded accuracy loss. For example, if one considers the much more common setting of \textit{linear} regression (least-squares or $\ell_2$ regression and, more generally, $\ell_p$ regression), there has been an extensive body of work studying the effect of matrix sketching and sampling for \textit{preconditioning, coreset construction, feature selection, dimensionality reduction, etc.} for (regularized or not) under- and over-constrained regression problems~\cite{avron2010blendenpik,chowdhury2018iterative,clarkson2009numerical,DMM06,Drineas2011,Sarlos2006,Clarkson2013a,Clarkson2013b,Clarkson2016,Avron2016,Pilanci2014,Pilanci2016,Derezinski2017,Derezinski2021}. At a high level, matrix sketching and sampling can be used to speed up linear regression in theory and in practice, either by preconditioning the input and then applying standard solvers or by constructing and solving a ``smaller'' regression problem, with little loss in accuracy. Additionally, from an interpretability perspective, we know how to construct coresets or select/extract features, thus reducing the dimensionality of the constraint space or the feature space, so that solving the ``induced'' problem results in small accuracy loss. In many cases, tight upper and lower bounds are known for the aforementioned operations and, indeed, solving linear regression problems has been a massive success story for matrix sketching and Randomized Linear Algebra, both in theory and in practice.

Despite the central importance of logistic regression in statistics and machine learning, and unlike linear regression, relatively little is known about how the method behaves when matrix sketching and sampling are applied to its input.~\citet{munteanu2018coresets,pmlr-v139-munteanu21a} initiated the study of \textit{coresets} for logistic regression, and~\citet{mai2021coresets} provided the current state-of-the-art bounds for this problem. In these works, coreset constructions reduce the number of training points in the logistic regression problem, while guaranteeing that the reduced problem satisfies provable bounds on the \textit{in-sample} logistic loss. The reduction in the number of points is typically performed via sampling algorithms, where a subset of the original points is selected with respect to carefully constructed probability distributions, such as the Lewis scores. Coreset construction methods can be thought of as \textit{sketching} the input matrix $\Xb$ from the left, using a sketching matrix of canonical vectors, where each vector indicates an input point (row of $\Xb$) to be included in the coreset. It is worth noting that~\citet{pmlr-v139-munteanu21a} even considered oblivious sketching methods to construct coresets that are not a subset of the original set of points, but consist of linear combinations of the original points, thus moving beyond the canonical vector paradigm for the sketching matrix.

Even less is known for reducing the dimensionality of the $d$-dimensional feature space of the logistic regression problem of eqn.~(\ref{eqn:blrr}). To the best of our knowledge, only feature selection methods have been considered in prior work~\cite{Lozano2011,elenberg2018restricted} (see Section~\ref{sxn:related} for details). Again, feature selection methods can be thought of as sketching the input matrix $\Xb$ from the right, using a sketching matrix of canonical vectors, where each vector indicates a feature (column of $\Xb$) to be selected. Other dimensionality reduction and feature extraction methods, which use a smaller subset of \textit{artificial} instead of \textit{actual} features to operate on, have not been studied at all in prior work.

Our work makes progress in both fronts, resolving open problems from prior work and presenting novel bounds for the complexity of coreset construction methods. Importantly, we initiate the study of \textit{forward error bounds} for dimensionality reduction methods (including both feature selection and feature extraction) for the feature space of logistic regression. Our work presents tight bounds for \textit{linear} dimensionality reduction methods for logistic regression. Interestingly, our forward error bounds are also applicable to the much more general setting of Generalized Linear Models (GLMs). 

\subsection{Our contributions}\label{section:contributions}

\vspace{0.04in}\noindent\textbf{Coreset construction (row sketching).} In Section \ref{sxn:logistic_loss}, we consider the problem of approximating the logistic loss function $\Lcal(\beta)$ by a different function $\Lcaltil(\beta)$. This is the setting of prior work on coresets for logistic regression~\cite{munteanu2018coresets, mai2021coresets}, where the function $\Lcaltil(\beta)$ is evaluated, for example, only on a subset of the input points $\xb_i$ (the coreset). Just like prior work, we are interested in bounding the approximation error 
$|\Lcal(\beta) - \Lcaltil(\beta)|,$
as a function of $\beta \in \R^d$ and we present a number of novel results in this context.

First, we lower bound the worst-case space complexity of \emph{any} data structure that approximates logistic loss to $\epsilon$-relative error. Coresets for logistic regression introduced in prior work are special cases of a low-space data structure giving an $\epsilon$-relative error approximation to logistic loss. Our bound implies that the construction given by \citet{mai2021coresets} has optimal size up to a $\Ocaltil(d\cdot\mu_\yb(\Xb)^2)$ factor, where $\mu_\yb(\Xb)$ is a data-dependent complexity measure (Definition~\ref{def:complexity_measure}). Bridging this gap would imply that coreset constructions are actually optimal among all possible data structures that attempt to reduce the number of points (or even the number of bits used to sketch the input) that logistic regression operates on. Our bound is a first step towards this objective.

Our second contribution is an efficient linear programming formulation for computing the data complexity measure $\mu_\yb(\Xb)$ of Definition~\ref{def:complexity_measure}, which was previously conjectured to be hard to compute by~\citet{munteanu2018coresets}. This quantity is a critical component of existing theoretical results bounding the necessary size of a coreset for logistic regression. Therefore, an efficient and accurate method to evaluate this complexity measure was a critical missing piece and paves the road towards an improved empirical validation of coreset constructions for logistic regression.

Finally, we show that replacing the input matrix $\Xb$ by any other matrix $\tilde{\Xb}$ such that the two-norm\footnote{Recall that the two-norm, also called the spectral norm, of a matrix $\Xb \in \mathbb{R}^{n \times d}$ is $\|\Xb\|_2 = \max_{\xb \in \mathbb{R}^d,\ \|\xb\|_2=1} \|\Xb\xb\|_2 = \max_{\yb \in \mathbb{R}^n,\ \|\yb\|_2=1} \|\yb^T\Xb\|_2$.} error
$\|\Xb - \tilde{\Xb}\|_2$
is bounded can be used to obtain an additive error approximation to the logistic loss. We also prove that this additive error guarantee is tight in the worst case. Low-rank approximations to $\Xb$ are often used instead of $\Xb$ in logistic regression, either to reduce the intrinsic dimensionality\footnote{Reducing the intrinsic dimensionality (rank) of the input results in storage savings, since one can store only a few singular values and singular vectors instead of the whole matrix. Also, it could result in improved running times, since many numerical algorithms take advantage of low-rank matrices.} of the input or to denoise it. Such low-rank approximations do result in a bounded two-norm approximation: It is well-known that setting $\tilde{\Xb} = \Xb_k$, where $\Xb_k$ is the best rank-$k$ approximation to $\Xb$ computed, for example, via the Singular Value Decomposition (SVD)\footnote{Randomized Linear Algebra algorithms can be used to approximate the best low-rank approximation to $\Xb$ with bounded accuracy loss~\cite{Halko2011,Drineas2017,Drineas2018,kishore2017literature}.}, satisfies $\|\Xb - \Xb_k\|_2 =\sigma_{k+1}(\Xb)$\footnote{$\sigma_{k+1}(\Xb)$ is the $(k+1)$st singular value of $\Xb$.}. Somewhat surprisingly, such results were not known for logistic regression prior to our work.

\vspace{0.04in}\noindent\textbf{Reducing the feature-space dimensionality (column sketching).} 
Our most fundamental contribution is the development of novel, tight bounds for sketching the feature space (columns) of regularized logistic regression problems. More specifically, given the data matrix $\Xb \in \R^{n \times d}$ of eqn.~(\ref{eqn:blrr}), we study the error induced by projecting the columns (features) of $\Xb$ to an arbitrary $k$-dimensional subspace (with $k \ll d$) spanned by the columns of a $k \times d$ matrix, $\Pb_k$. Formally, consider the following, \textit{sketched} regression problem:
\begin{align}
    \beta_k &= \argmin_{\beta \in \R^k} \Lcal_k(\beta) 
    = \argmin_{\beta \in \R^k} \sum_{i=1}^n \log(1 + e^{-\yb_i\xb_i^T\Pb_k\beta}) + \frac{\lambda}{2} \|\beta\|_2^2,\label{eqn:sketchedregression1}
\end{align}
where $\Pb_k \in \R^{d \times k}$, $k \ll d$, has orthonormal columns; all other quantities are as in eqn.~(\ref{eqn:blrr}). This family of dimensionality reducing transformations is relevant to understanding several important topics for sketched logistic regression:
\begin{itemize}
    \item Any feature selection method for logistic regression problems can be modelled as a matrix $\Pb_k$ whose columns are standard basis vectors. This includes popular methods that interpret the output of logistic regression by keeping only the $k$ largest magnitude coefficients of the learned parameter vector $\beta$~\cite{Wu2009,Held2016,Prive2019} and attempt to interpret the associated features.
    \item Dimensionality reduction methods for logistic regression that project the feature matrix on its top $k$ right singular vectors, thus using those singular vectors as eigenfeatures. Empirical evidence suggests that reducing the features of a data set to its top principal components can greatly reduce the computational and memory burden of solving large-scale logistic regression problems with limited reduction in accuracy. This can be modeled in our framework as the case where $\Pb_k$ are the top $k$ right singular vectors of $\Xb$. 
\end{itemize}
As we shall see in Section~\ref{sxn:logistic_loss} and Theorem~\ref{thm:logistic_loss_lower}, there are barriers to getting relative error guarantees on the logistic loss when sketching the high-dimensional feature space of a data set, thus effectively reducing the rank of the input matrix. Motivated by such lower bounds, we consider the \emph{forward error} when sketching the feature space, namely the two-norm difference between the optimal parameters of the original and sketched problems: 
\begin{equation}\label{eqn:pd111}
\|\Pb_k\beta_k - \beta_d\|_2^2.
\end{equation}
We prove the following complementary upper and lower bounds on the forward error of eqn.~(\ref{eqn:pd111}). The bounds hold for \emph{any} valid tuple $(\Xb, \yb, \Pb_k, \lambda)$:
\begin{align*}
     \frac{4}{ \|\Xb\|_2^2 + \lambda} \cdot \Phi(\Xb,\yb,\Pb_k)
     \leq \|\Pb_k\beta_k - \beta_d\|_2^2
     \leq\frac{2}{\lambda}\Phi(\Xb,\yb,\Pb_k).
\end{align*}
In the above, $\Phi(\Xb,\yb,\Pb_k)$ is the following function\footnote{We omit $\beta_k$ and $\beta_d$ from the definition of $\Phi(\cdot)$ since they can be computed given the other inputs.}:
$$\Phi(\Xb,\yb,\Pb_k) = \beta_d^T(\Ib - \Pb_k\Pb_k^T) \Xb^T\Db_\yb \wb(\Pb_k\beta_k),$$
where $\Db_\yb \in \R^{n \times n}$ denotes the diagonal matrix whose entries are $\yb_i$; $\wb(\beta) \in \R^n$ has entries $\wb(\beta)_i = \sigma(-\yb_i \xb_i^T\beta)$, for all $i=1\ldots n$; and $\sigma(u)$ denotes the logistic function $\sigma(u) = \nicefrac{1}{1 + e^{-u}}$.  
Note that our bounds are \textit{tight up to a constant factor} when $\|\Xb\|_2^2 \leq C\lambda$ for some constant $C > 0$.

To better understand the above bound, one could interpret $\wb(\Pb_k\beta_k)$ as a vector describing the distribution of the $i$-th label under the model given by $\Pb_k\beta_k$. The norm can be bounded naively by $\sqrt{n}$. Additional structural assumptions could restrict the norm further. Under generative model assumptions, one could bound the term $\sigma(\xb_i^T \Pb_k \Pb_k^T \beta_d)$ to probabilistically bound the entries of $\wb(\Pb_k\beta_k)$, and hence its norm, more sharply.

It is worth noting that forward error bounds are a natural accuracy metric in numerical analysis and theoretical computer science. For example, for linear regression, forward error bounds in the context of sensitivity analysis, to bound the accuracy of iterative approximation algorithms, or to analyze randomized linear regression algorithms, have been a topic of intense research for many years (see the works of~\citet{Higham2002,Woodruff2014,Drineas2017} and references therein). In the context of logistic regression, forward error bounds are optimization-based guarantees that are considered to be a precursor for generalization errors which typically require additional statistical assumptions ~\cite{loh2015regularized,negahban2012unified,elenberg2018restricted}. We do emphasize that related prior work also focused on error metrics such as in-sample logistic loss~\cite{munteanu2018coresets,mai2021coresets}, that have no known connection to the generalization error for logistic regression.

\vspace{0.04in}\noindent\textbf{Extensions.} We generalize our results in three different ways.  In Appendix~\ref{section:generative_model}, we show that generative assumptions on the data set $(\Xb, \yb)$ from prior work are enough to prove our forward error bounds \emph{without} any explicit regularization (i.e. with $\lambda=0$). In Appendix~ \ref{section:glm}, we show that our upper and lower forward error bounds generalize to a wide class of generalized linear models (GLMs) including linear and multinomial regression.

Finally, in Appendix~\ref{section:information_theoretic_bound}, we show that our upper bound can be naturally interpreted from an information theoretic perspective under the typical modeling assumptions of logistic regression. More specifically, given the matrix of $n$ points $\Xb$ and the logistic regression parameter $\beta$, let $\ybtil \in\{-1, 1\}^n$ be a random binary vector where the distribution of the $i$-th coordinate is the distribution given by the linear-log odds model parameterized by $\beta$. If we additionally have a true label vector $\yb\in\{-1,1\}^n$, then we can define a new random vector $\ybhat \in \{-1,1\}^n$ such that $\ybhat = \yb \cdot \ybtil$, where the multiplication is entry-wise.  Intuitively, $\PP(\ybhat_i = 1)$ is the probability that a label drawn from the linear log-odds model agrees with the true label, and this measures how well the model fits the true labels. We prove that if $\Dcal_p$ is the distribution of $\ybhat$ induced by $\beta = (\Ib - \Pb_k\Pb_k^T)\beta_d$ and $\Dcal_q$ is the distribution of $\ybhat$ induced by $\beta = \Pb_k\beta_k$, then 
$$\|\Pb_k\beta_k - \beta_d\|_2^2 \leq \frac{2}{\lambda} H(\Dcal_p, \Dcal_q),$$ 
where $H$ denotes the cross-entropy between the two distributions.

\subsection{Related Work}\label{sxn:related}

A line of research that motivated our work focused on the development of coresets for logistic regression. This was initiated by~\citet{munteanu2018coresets}, and the current state-of-the-art bounds for coreset construction for logistic regression appeared in~\cite{mai2021coresets}.  We already discussed the contributions of these two papers earlier in the introduction. 
Another line of research that is relevant to our work is that of feature selection for logistic regression. In this setup, the goal is to select $k$ features out of $d$, which can be construed as restricting the sketching matrix $\mathbf{P}_k$ to specifically be a selection matrix. Towards this goal,~\citet{elenberg2018restricted} provide approximation and recovery guarantees for performance of greedy selection of features vis-{\`a}-vis the best possible \emph{sparse} solution for \emph{general} functions. On the other hand, our results quantify recovery-like guarantees for the parameter $\beta$ for \emph{any} given $\mathbf{P}_k$ and \emph{any} possible $\beta_d$ but specifically for logistic regression. Since we generalize in different ways, our bounds are not directly comparable with theirs. Even if we specialize their results for logistic regression and choose our $\Pb_k$ to select the features selected by the greedy algorithm for the special case when $\beta_d$ is inherently $k$-sparse, our Theorem~\ref{thm:forward_error_upper_bound} still is incomparable to~\citet{elenberg2018restricted}[Theorem 6] because the two theorems upper bound slightly different quantities.~\citet{Lozano2011} also provide guarantees for logistic regression which are similar to those of~\citet{elenberg2018restricted} but under group-sparsity constraints, and are incomparable to our results.

\section{Approximating Logistic Loss}\label{sxn:logistic_loss}

In this section, we consider approximations to the \emph{logistic loss} function. This type of error guarantees were previously provided by a body of work on coresets for logistic regression; see the works of~\citet{munteanu2018coresets, mai2021coresets} for state-of-the-art results and bounds and references therein for motivation and prior work on this topic.

We begin by proving a general lower bound on the space complexity of \emph{any} data structure which approximates logistic loss to $\epsilon$-relative error for every parameter vector $\beta \in \R^d$. Our lower bound shows that the coreset construction of \citet{mai2021coresets} has optimal space complexity up to an $\Ocaltil(d\cdot \mu_\yb(\Xb)^2)$ factor, where $\mu_\yb(\Xb)^2$ is a data complexity measure introduced by~\citet{munteanu2018coresets} (see also Definition~\ref{def:complexity_measure} below).  Furthermore, we provide an efficient linear programming formulation to compute this data complexity measure on real data sets, \textit{refuting an earlier conjecture that the factor was hard to compute}~\cite{munteanu2018coresets}.

Next, we show that any low-rank approximation $\Xbbar$ of the data matrix $\Xb$ can be used to approximate the logistic loss function $\Lcal(\beta)$ up to a $\sqrt{n}\|\Xb - \Xbbar\|_2\|\beta\|_2$ additive error. The factor $\|\Xb - \Xbbar\|_2\|\beta\|_2$ is the spectral norm (or two-norm) error of the low-rank approximation and we also prove that this bound is tight in the worst case. Low rank approximations are commonly used to reduce the time and space complexity of numerical algorithms, especially in settings where the data matrix $\Xb$ is numerically low-rank or has a decaying spectrum of singular values.

In this section, we assume without loss of generality that $\yb_i = -1$ for all $i=1\ldots n$. Recall that any logistic regression problem can be transformed to this standard form by multiplying both $\Xb$ and $\yb$ by $-\Db_\yb$ (see Section~\ref{section:contributions} for notation), in which case the logistic loss of the original problem is equal to that of the transformed problem for all $\beta\in \R^d$.  Furthermore, we assume in this section that the logistic loss has no regularization, i.e. $\lambda = 0$. This is because increasing the regularization parameter will only increase the quality of the approximation to the logistic loss in terms of relative error, as the regularization penalty can be computed exactly. 

\subsection{General space complexity lower bound}

We lower bound the space complexity of any data structure that approximates logistic loss to $\epsilon$-relative error. More specifically, assume we are given a logistic regression problem specified by $\Xb$ and $\yb$ with bounded bit complexity. We consider the problem of compressing the data to a small number of bits so that a data structure $f(\cdot)$ with access only to these small number of bits satisfies 
$$|f(\beta) - \Lcal(\beta)| \leq \epsilon \Lcal(\beta),$$ 
for all $\beta \in \R^d$. Recall that the running time of the computation compressing the data to a small number of bits and evaluating $f(\beta)$ for a given query $\beta$ is unbounded in this model. Hence, Theorem~\ref{thm:space_lower_bound} provides a strong lower bound on the space needed for \textit{any} compression of the data that can be used to compute an $\epsilon$-relative error approximation to the logistic loss, including, but not limited to, coresets. 

At a high level, our proof operates by showing that a relative error approximation to logistic loss can be used to obtain a relative error approximation to ReLu regression, which in turn can be used to construct a relative error $\ell_1$-subspace embedding. Previously,~\citet{li2021tight} lower bounded the worst case space complexity of any data structure that maintains an $\ell_1$-subspace embedding by reducing the problem to the well-known~\texttt{INDEX} problem in communication complexity. See Appendix~\ref{sxn:proof_appendix} for the proof of Theorem~\ref{thm:space_lower_bound}.
\begin{theorem}\label{thm:space_lower_bound}
    Let $\Xb \in \R^{n \times d}$ and $\yb \in \{-1, 1\}^n$ be the data matrix and label vector such that each entry of $\Xb$ is specified by $\Ocal(\log nd)$ bits. If $f(\cdot)$ is a data structure such that for all $\beta \in \R^d$, $|f(\beta) - \Lcal(\beta)| \leq \epsilon\Lcal(\beta),$ then $f(\cdot)$ requires $\Omegatil\left(\frac{d}{\epsilon^2}\right)$ space in the worst case, provided that $d = \Omega(\log 1/\epsilon)$ and $n = \Omegatil\left(d\epsilon^{-2}\right)$ and $\epsilon > 0$ is sufficiently small. 
\end{theorem}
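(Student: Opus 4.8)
The plan is to realize the reduction chain announced before the statement: start from a hard instance for $\ell_1$-subspace embeddings supplied by \citet{li2021tight} (whose $\Omegatil(d/\epsilon^2)$ lower bound is obtained by reduction to \texttt{INDEX}), encode it into a logistic-regression instance, and show that any low-space data structure for logistic loss would yield a low-space $\ell_1$-subspace embedding, contradicting that lower bound. Concretely, take a hard $\ell_1$ instance $\Ab \in \R^{m \times d}$ and set $\Xb = \begin{bmatrix}\Ab\\-\Ab\end{bmatrix} \in \R^{2m\times d}$ with $\yb = -\one$, so that $n = 2m$. In the standard form $\yb_i = -1$ each summand of the logistic loss is the softplus $s(z) = \log(1+e^z)$ evaluated at $z = \xb_i^T\beta$, and the key elementary fact is the sandwich $\relu(z) \le s(z) \le \relu(z) + \log 2$ for all $z$. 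Because the rows of $\Xb$ come in $\pm$ pairs and $\relu(z)+\relu(-z)=|z|$, summing the sandwich over all rows gives $\|\Ab\beta\|_1 \le \Lcal(\beta) \le \|\Ab\beta\|_1 + 2m\log 2$; that is, the logistic loss of $\Xb$ equals the $\ell_1$ objective of $\Ab$ up to an additive $O(n)$ term.

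The second step converts this additive relation into a relative one, exploiting that $\|\Ab\beta\|_1$ is positively homogeneous of degree one while softplus is not. Given a data structure $f$ with $|f(\beta)-\Lcal(\beta)|\le \epsilon\Lcal(\beta)$ for all $\beta$, I would evaluate it at a rescaled query $t\beta$ and divide by $t$: the quantity $\tfrac1t\Lcal(t\beta)$ approaches $\|\Ab\beta\|_1$ with additive error at most $\tfrac{2m\log 2}{t}$, so $\tfrac1t f(t\beta)$ approximates $\|\Ab\beta\|_1$ to relative error $\epsilon + \tfrac{2m\log 2}{t\,\|\Ab\beta\|_1}$. The bounded bit-complexity hypothesis (each entry of $\Xb$ uses $\Ocal(\log nd)$ bits) is exactly what makes this controllable for a fixed, polynomially large $t$: at the structured query vectors the embedding lower bound issues (those coming from the underlying \texttt{INDEX} reduction), the nonzero values of $\|\Ab\beta\|_1$ are bounded below by $1/\mathrm{poly}(nd)$, so a choice $t = \mathrm{poly}(n,d,1/\epsilon)$ drives the second term below $\epsilon$ while keeping the reduction's numbers polynomially bounded and the per-entry bit complexity within $\Ocal(\log nd)$. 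The resulting post-processed structure answers $\|\Ab\beta\|_1$ queries to $(1 \pm O(\epsilon))$ relative error using the same space as $f$, i.e.\ it is an $\ell_1$-subspace embedding of $\Ab$.

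Applying the lower bound of \citet{li2021tight} to this embedding forces $f$ to use $\Omegatil(d/\epsilon^2)$ space, which is the claim. The side conditions are threaded through at this point: the row count $n = 2m$ must meet the $\Omegatil(d\epsilon^{-2})$ requirement of the $\ell_1$ construction (the $\pm$ augmentation only doubles $m$, harmlessly), $d = \Omega(\log 1/\epsilon)$ is inherited from that same construction, and ``$\epsilon$ sufficiently small'' absorbs the constant-factor slack introduced by the $O(\epsilon)$ in the relative-error transfer.

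The step I expect to be the main obstacle is the uniform relative-error control in the second paragraph. The additive gap between softplus and ReLU accumulates linearly in $n$, so I must certify that one polynomially bounded scaling $t$ simultaneously drives the $2m\log 2$ gap below $\epsilon\,\|\Ab\beta\|_1$ for every query the embedding lower bound uses, and that this choice of $t$ does not inflate the per-entry bit complexity past $\Ocal(\log nd)$. Verifying this uniformity is delicate precisely because it must remain compatible with the quantitative regime ($n = \Omegatil(d\epsilon^{-2})$, $d = \Omega(\log 1/\epsilon)$) that \citet{li2021tight} needs; the remaining links are either exact (the $\pm$ augmentation turning ReLU into $\ell_1$) or cited (the subspace-embedding lower bound), so they should go through routinely once the rescaling is pinned down.
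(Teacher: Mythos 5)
Your proposal is correct in outline and follows the same reduction skeleton as the paper's proof: relate logistic loss to an $\ell_1$-type objective via the softplus--ReLU comparison, rescale queries by a large factor $t$ to suppress the additive $O(n/t)$ gap, invoke bounded bit complexity to lower-bound the nonzero values of the $\ell_1$ objective, and import the space lower bound of \citet{li2021tight}. Within that skeleton you make two genuinely different choices. First, your $\pm$ augmentation $\Xb = [\Ab;\, -\Ab]$ exploits $\relu(z) + \relu(-z) = |z|$ so that the linear terms cancel and the logistic loss directly sandwiches $\|\Ab\beta\|_1$ up to $n\log 2$; the paper instead works with an arbitrary $\Xb$, passes through an intermediate ReLU-regression data structure, and eliminates the linear term $\frac{1}{2}\one^T\Xb\beta$ by storing $\one^T\Xb$ exactly in $\Ocal(d)$ extra space and subtracting it from the answer. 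Your encoding is cleaner and collapses one stage of the paper's chain. Second, you justify a single polynomially large $t$ by restricting attention to the structured queries issued by the \texttt{INDEX} reduction inside \citet{li2021tight}; the paper instead handles \emph{every} query $\beta \in \R^d$ with a per-query certificate: since the bounded-bit-complexity input space is finite for fixed $(n,d)$, the quantity $\Rcal_{\min}(\beta) = \inf\{\Rcal(\beta;\Xb,\yb) : \Rcal(\beta;\Xb,\yb) > 0\}$ is strictly positive and computable in the unbounded-time model, and $t$ is taken to be $\Theta\left(n/(\epsilon\,\Rcal_{\min}(\beta))\right)$.

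Two spots in your argument need tightening. (a) The zero case: a relative-error guarantee forces an \emph{exact} answer whenever $\|\Ab\beta\|_1 = 0$, and your error expression $\epsilon + 2m\log 2/(t\|\Ab\beta\|_1)$ is vacuous there. You need an explicit thresholding step, as the paper gives: if $\frac{1}{t}f(t\beta)$ falls below a fixed fraction of the minimum positive value, certify that the true answer is zero and return $0$. (b) Your fixed polynomial $t$ is only valid if the queries used in the lower bound of \citet{li2021tight} have bounded bit complexity, a fact you would have to extract from the internals of that proof; adopting the per-query $\Rcal_{\min}(\beta)$ trick removes this dependence entirely and makes your derived $\ell_1$ structure correct on all of $\R^d$, which is the form in which the cited lower bound is stated. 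Finally, note that the rescaling is applied to the \emph{query}, not to the matrix, so your concern that $t$ might inflate the per-entry bit complexity of $\Xb$ past $\Ocal(\log nd)$ is moot: the hypothesis on bit complexity constrains only the data, while $f$ must answer arbitrary real queries.
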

Prior work on coreset construction for logistic regression critically depends on the data complexity measure $\mu_\yb(\Xb)$, which was first introduced in \cite{munteanu2018coresets}, and is defined as follows.
\begin{definition}\label{def:complexity_measure} (Classification Complexity Measure \cite{munteanu2018coresets}) 
    For any $\Xb \in \R^{n \times d}$ and $\yb \in \{-1, 1\}^n$, let $\mu_\yb(\Xb) = \sup_{\beta \neq \zero} \frac{\|(\Db_\yb\Xb\beta)^+\|_1}{\|(\Db_\yb\Xb\beta)^-\|_1}$, where $\Db_\yb$ is a diagonal matrix with $\yb$ as its diagonal, and $(\Db_\yb\Xb\beta)^+$ and $(\Db_\yb\Xb\beta)^-$ denote the positive and the negative entries of $\Db_\yb\Xb\beta$ respectively.
\end{definition}
The work of~\citet{mai2021coresets} showed that sampling $\Ocal\left(\frac{d \cdot \mu_\yb(\Xb)^2}{\epsilon^2}\right)$ rows of $\Xb$ yields an $\epsilon$-relative error coreset for logistic loss with high probability, where $\mu_\yb(\Xb)$ is the data complexity measure of Definition~\ref{def:complexity_measure}. This leaves a gap of $\Ocaltil(d\cdot\mu_\yb(\Xb)^2)$ between our space complexity lower bound and the best known upper bound for the space needed to approximate logistic loss to relative error. Closing this gap is an open problem for future work.

Understanding whether the complexity measure of Definition~\ref{def:complexity_measure} truly explains the performance of coresets on real-world data sets would help guide further improvements to coreset construction for logistic regression. However, prior work conjectured that $\mu_\yb(\Xb)$ was hard to compute, while providing a polynomial time algorithm to approximate the measure to within a $\operatorname{poly}(d)$-factor (see Theorem 3 of \citet{munteanu2018coresets}). We refute this conjecture by showing that the complexity measure $\mu_\yb(\Xb)$ can in fact be computed efficiently via linear programming.
%
\begin{theorem}\label{thm:compute_complexity_measure}
    If the complexity measure $\mu_\yb(\Xb)$ of Definition~\ref{def:complexity_measure} is finite, it can be computed \textit{exactly} by solving a linear program with $2n$ variables and $4n$ constraints.
\end{theorem}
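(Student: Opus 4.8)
The plan is to recast the scale-invariant ratio defining $\mu_\yb(\Xb)$ as a linear program by (i) normalizing the denominator, (ii) linearizing the positive and negative parts, and (iii) encoding the requirement that the argument lie in the column space of $\Ab \defeq \Db_\yb\Xb$. Writing $\zb = \Ab\beta$, I would first record that $\mu_\yb(\Xb) = \sup\{\|\zb^+\|_1/\|\zb^-\|_1 : \zb \in \mathrm{range}(\Ab),\ \zb \neq \zero\}$, where $(\cdot)^+,(\cdot)^-$ denote the positive and negative parts. Since the objective is invariant under positive scaling of $\zb$, I fix $\|\zb^-\|_1 = 1$ and maximize $\|\zb^+\|_1$. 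The central trick is the elementary identity $\|\zb^+\|_1 - \|\zb^-\|_1 = \one^\ts\zb$, which converts the normalized numerator into a \emph{linear} objective and is what makes an LP formulation possible at all.

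Concretely, I introduce $\pb,\qb \in \R^n$ (a total of $2n$ variables) as relaxations of $\zb^+,\zb^-$ and solve
\begin{align*}
\max_{\pb,\qb}\ \one^\ts\pb \quad \text{s.t.}\quad \pb \ge \zero,\ \qb \ge \zero,\ \one^\ts\qb = 1,\ \pb - \qb \in \mathrm{range}(\Ab).
\end{align*}
The nonnegativity constraints contribute $2n$ inequalities, the normalization one equality, and the membership $\pb-\qb \in \mathrm{range}(\Ab)$ is written as $(\Ib - \Ab\Pinv{\Ab})(\pb - \qb) = \zero$, i.e. $n$ linear equalities, for at most $4n$ linear constraints in total, matching the claimed size. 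Crucially I do \emph{not} impose the complementarity $\pb\circ\qb = \zero$; keeping the program linear is exactly what forces the correctness argument below.

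It then remains to show the optimal value equals $\mu_\yb(\Xb)$. For $\opt \ge \mu_\yb(\Xb)$, given any $\beta$ with finite ratio I scale $\zb = \Ab\beta$ so that $\|\zb^-\|_1 = 1$ and take $(\pb,\qb) = (\zb^+,\zb^-)$, which is feasible with objective equal to the ratio; taking the supremum gives the bound. For $\opt \le \mu_\yb(\Xb)$, for any feasible $(\pb,\qb)$ I set $\sbb = \pb-\qb \in \mathrm{range}(\Ab)$; then $\pb = \sbb+\qb \ge \zero$ and $\qb \ge \zero$ force $\one^\ts\qb \ge \|\sbb^-\|_1$, so $\|\sbb^-\|_1 \le 1$, and $\one^\ts\pb = \one^\ts\sbb + 1 = \|\sbb^+\|_1 - \|\sbb^-\|_1 + 1 \le (\mu_\yb(\Xb)-1)\|\sbb^-\|_1 + 1 \le \mu_\yb(\Xb)$, using $\|\sbb^+\|_1 \le \mu_\yb(\Xb)\|\sbb^-\|_1$ and $\mu_\yb(\Xb) \ge 1$. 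The degenerate case $\|\sbb^-\|_1 = 0$ is handled by the finiteness hypothesis, which forces $\sbb = \zero$ there (otherwise $\sbb \ge \zero$, $\sbb \neq \zero$ would make the ratio infinite); this same hypothesis is precisely what guarantees the LP is bounded, since any unbounded direction would be a nonzero $\zb \in \mathrm{range}(\Ab)$ with $\zb \ge \zero$. I expect the main obstacle to be exactly this bookkeeping around the relaxation: verifying that dropping complementarity and replacing the nonconvex normalization $\|\zb^-\|_1 = 1$ by the linear constraints above leaves the optimum unchanged, and checking that the finiteness assumption cleanly rules out both unboundedness and the degenerate $\|\sbb^-\|_1 = 0$ case.
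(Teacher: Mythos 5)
Your proposal is correct, and at its core it uses the same machinery as the paper's proof: the substitution $\zb = \Db_\yb\Xb\beta$ with a projection constraint enforcing $\zb \in \operatorname{Range}(\Db_\yb\Xb)$, a split of $\zb$ into two nonnegative vectors giving $2n$ variables, and the linearizing identity relating $\one^T\zb$, $\|\zb\|_1$, and the positive/negative parts. The execution differs in ways worth recording. The paper bounds the full norm, $\|\zb\|_1 \leq C$, and minimizes $\one^T\zb$; the optimal value of its LP is \emph{not} $\mu_\yb(\Xb)$ itself, so it must recover $\beta^*$ by solving $\zb^* = \Db_\yb\Xb\beta^*$ and then evaluate the ratio at $\beta^*$. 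You instead pin the denominator, $\one^T\qb = 1$, and maximize $\one^T\pb$, so your LP's optimal value equals $\mu_\yb(\Xb)$ directly, with no post-processing. More importantly, your two-sided argument ($\opt \geq \mu_\yb(\Xb)$ by exhibiting feasible points, $\opt \leq \mu_\yb(\Xb)$ for \emph{every} feasible $(\pb,\qb)$) explicitly verifies exactly the points the paper leaves implicit: that dropping the complementarity $\pb \circ \qb = \zero$ is harmless, that finiteness of $\mu_\yb(\Xb)$ rules out an unbounded LP (equivalently, a nonzero nonnegative vector in the range), and the degenerate case $\|\sbb^-\|_1 = 0$; the paper's chain of $\argmax$ reformulations relies on the same facts but never checks them. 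What each approach buys: the paper's version constructively produces the extremal direction $\beta^*$ as a witness, while yours produces the complexity measure as the LP value and comes with a tighter correctness proof. One small gap on your side: you invoke $\mu_\yb(\Xb) \geq 1$ without justification; it follows in one line because the ratio at $-\zb$ is the reciprocal of the ratio at $\zb$, so the supremum over a sign-symmetric set is at least $1$ (and finiteness guarantees every nonzero $\zb$ in the range has both parts nonzero).
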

\begin{proof}
    We now derive a linear programming formulation formulation to compute the complexity measure in Definition \ref{def:complexity_measure}. Note we flip the numerator and denominator from Definition \ref{def:complexity_measure} without loss of generality. Define $\beta^* \in \R^d$ as, ($C$ is an arbitrary positive constant):
\begin{flalign*}
    \beta^* &= \argmax_{\beta \neq \zero} \frac{\|(\Db_\yb\Xb\beta)^-\|_1}{\|(\Db_\yb\Xb\beta)^+\|_1} \\
    &= \underset{\beta \in \R^d}{\argmax} ~ \|(\Db_\yb\Xb\beta)^-\|_1 \text{ such that } \|(\Db_\yb\Xb\beta)^+\|_1 \leq C \\
    &= \underset{\beta \in \R^d}{\argmax} ~ \|\Db_\yb\Xb\beta\|_1 \text{ such that } \|(\Db_\yb\Xb\beta)^+\|_1 \leq C.
\end{flalign*}

Now we reformulate the last constraint: 
\begin{align*}
\|(\Db_\yb\Xb\beta)^+\|_1 &= \sum_{i=1}^n \max\{[\Db_\yb\Xb\beta]_i, 0\} \\
&= \sum_{i=1}^n \frac{1}{2} [\Db_\yb\Xb\beta]_i + \frac{1}{2}|[\Db_\yb\Xb\beta]_i| \\
&= \frac{1}{2}\one^T\Db_\yb\Xb\beta + \frac{1}{2}\|\Db_\yb\Xb\beta\|_1.  
\end{align*}
Therefore, the above formulation is equivalent to:
\begin{flalign*}
    \beta^* = \underset{\beta \in \R^d}{\argmax}& ~ \|\Db_\yb\Xb\beta\|_1 \\
    \text{ such that }& \frac{1}{2}\one^T\Db_\yb\Xb\beta + \frac{1}{2}\|\Db_\yb\Xb\beta\|_1 \leq C \\ 
    = \underset{\beta \in \R^d}{\argmin}& ~ \one^T\Db_\yb\Xb\beta \text{ such that } \|\Db_\yb\Xb\beta\|_1 \leq C.
\end{flalign*}
Next, we replace $\Db_\yb\Xb\beta$ with a single vector $\zb \in \R^n$ and a linear constraint to guarantee that $\zb \in \operatorname{Range}(\Db_\yb\Xb)$. Let $\Pb_R \in \R^{n \times n}$ be the orthogonal projection to $\operatorname{Range}(\Db_\yb\Xb)$.
\begin{gather*}
    \zb^* = \underset{\beta \in \R^d}{\argmin} ~ \one^T\zb \\\text{ such that } \|\zb\|_1 \leq C, ~(\Ib - \Pb_R)\zb = \zero.
\end{gather*}
Next, we solve this formulation by constructing a linear program such that $[\zb_+, \zb_-] \in \R^{2n}$ corresponds to the absolute value of the positive and negative elements of $\zb$:  \begin{align*}
    \zb^* = \underset{\beta \in \R^d}{\argmin}& ~ \one_{n}^T(\zb_+ -\zb_-) \\ 
    \text{ such that }& \sum_{i=1}^{2n} [\zb_+, \zb_-]_i \leq C,\\ 
    &(\Ib - \Pb_R)(\zb_+ - \zb_-)  = \zero,\ \ 
    \zb_+,\zb_- \geq 0.
\end{align*}
Observe that this is a linear program with $2n$ variables and $4n$ constraints.  After solving this program for $\zb_+^*$ and $\zb_-^*$, we can compute $\zb^* = \zb_+^* - \zb_-^*$.  From this, we can compute $\beta^*$ by solving the linear system $\zb^* = \Db_\yb\Xb\beta^*$, which is guaranteed to have a solution by the linear constraint $(\Ib - \Pb_R)\zb^* = \zero$.

After solving for $\beta^*$, we can compute (recall that the numerator and denominator are flipped  readability without loss of generality) $$\mu_\yb(\Xb) = \frac{\|(\Db_\yb\Xb\beta^*)^-\|_1}{\|(\Db_\yb\Xb\beta^*)^+\|_1},$$ completing the proof.
\end{proof}
Prior experimental evaluation~\cite{munteanu2018coresets, mai2021coresets} of coreset constructions relied on estimates of $\mu_\yb(\Xb)$ using the method provided by \citet{munteanu2018coresets}. Given the large $\operatorname{poly}(d)$ approximation factor of this algorithm, it is unclear how reliable these estimates are. Our linear programming formulation provides a novel way to relate the proposed complexity measure and coreset performance.

\subsection{Low-rank approximations in logistic regression}

In this section, we consider the performance of low-rank approximations to the matrix $\Xb$ in order to approximately compute the logistic loss function. Low-rank approximations are often used to reduce the time complexity or to denoise and improve the performance of clustering and classification algorithms (such as logistic regression) by replacing a matrix with its low rank factors~\cite{Drineas2004,Paul2014}. To the best of our knowledge, its use for logistic regression has not been explicitly analyzed in prior work. 

Using low-rank approximations of $\Xb$ to estimate the logistic loss is appealing due to the extensive body work on fast constructions of low-rank approximations via sketching, sampling, and direct methods \cite{kishore2017literature}. We show that a spectral approximation provides an additive error guarantee for the logistic loss and that this guarantee is tight on worst-case inputs.

\begin{theorem}\label{thm:logistic_loss_upper}
    If $\Xb, \Xbtil \in \R^{n \times d}$, then for all $\beta \in \R^d$,
    \begin{gather*}
        |\Lcal(\beta; \Xb) - \Lcal(\beta, \Xbtil)| \leq \sqrt{n} \|\Xb - \Xbtil\|_2 \|\beta\|_2.
    \end{gather*}
\end{theorem}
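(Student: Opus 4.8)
The plan is to reduce the claim to the $1$-Lipschitz continuity of the softplus function and then convert the resulting $\ell_1$ bound into a spectral-norm bound. Write $g(u) = \log(1 + e^u)$, so that (using the standing assumption $\yb_i = -1$, though the sign turns out to be immaterial) $\Lcal(\beta;\Xb) = \sum_{i=1}^n g(\xb_i^T\beta)$ and likewise $\Lcal(\beta;\Xbtil) = \sum_{i=1}^n g(\xbtil_i^T\beta)$, where $\xbtil_i^T$ are the rows of $\Xbtil$. The first step is to record that $g'(u) = \sigma(u) \in (0,1)$, so that $g$ is $1$-Lipschitz on $\R$. This is the only analytic property of the logistic loss that the argument uses, and it is precisely why the same proof later goes through for the general label case ($|\yb_i| = 1$) and, with minor modification, for other GLMs.

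Next I would bound the loss difference termwise. By the triangle inequality followed by the Lipschitz estimate $|g(a) - g(b)| \le |a - b|$,
\begin{align*}
|\Lcal(\beta;\Xb) - \Lcal(\beta;\Xbtil)|
&\le \sum_{i=1}^n |g(\xb_i^T\beta) - g(\xbtil_i^T\beta)| \\
&\le \sum_{i=1}^n |(\xb_i - \xbtil_i)^T\beta| = \|(\Xb - \Xbtil)\beta\|_1.
\end{align*}
Thus the entire problem collapses to controlling $\|\Eb\beta\|_1$, where $\Eb = \Xb - \Xbtil$.

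Finally, I would pass from the $\ell_1$ norm to the spectral norm. Cauchy--Schwarz gives $\|\vb\|_1 \le \sqrt{n}\,\|\vb\|_2$ for any $\vb \in \R^n$, and the definition of the two-norm gives $\|\Eb\beta\|_2 \le \|\Eb\|_2\|\beta\|_2$. Chaining these,
\begin{equation*}
\|(\Xb - \Xbtil)\beta\|_1 \le \sqrt{n}\,\|(\Xb - \Xbtil)\beta\|_2 \le \sqrt{n}\,\|\Xb - \Xbtil\|_2\,\|\beta\|_2,
\end{equation*}
which is exactly the claimed inequality. There is no serious obstacle here: the proof is short and elementary, and the two nontrivial ingredients (the $1$-Lipschitz bound and $\|\Eb\beta\|_2 \le \|\Eb\|_2\|\beta\|_2$) are both standard. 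The one point worth dwelling on is the origin of the $\sqrt{n}$ factor, which enters \emph{solely} through the $\ell_1$-to-$\ell_2$ conversion. This is what makes the accompanying tightness statement interesting: to show the bound cannot be improved, one must exhibit $\Eb$ and $\beta$ for which $\Eb\beta$ is essentially a flat vector aligned with the top singular direction of $\Eb$, so that Cauchy--Schwarz is nearly an equality and the softplus difference is nearly linear in its argument.
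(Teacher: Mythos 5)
Your proof is correct and follows essentially the same route as the paper's: both reduce the loss difference to the per-row bound $|g(\xb_i^T\beta) - g(\xbtil_i^T\beta)| \le |(\xb_i - \xbtil_i)^T\beta|$ and then chain $\|(\Xb - \Xbtil)\beta\|_1 \le \sqrt{n}\,\|(\Xb - \Xbtil)\beta\|_2 \le \sqrt{n}\,\|\Xb - \Xbtil\|_2\|\beta\|_2$. The only difference is cosmetic: you obtain the per-row bound from the $1$-Lipschitz property of the softplus ($g' = \sigma \in (0,1)$), whereas the paper verifies the same inequality by a direct algebraic manipulation of the log-ratio $\log\bigl((1+e^{\sbb_i})/(1+e^{\sbb_i+\db_i})\bigr)$.
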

We note that Theorem~\ref{thm:logistic_loss_upper} holds for any matrix $\Xbtil \in \R^{n \times d}$ that approximates $\Xb$ with respect to the spectral norm, and does not necessitate that $\Xbtil$ has low-rank. We now provide a matching lower-bound for the logistic loss function in the same setting.
\begin{theorem}\label{thm:logistic_loss_lower}
    For every $d,n \in \mathbb{N}$ where $d \geq n$, there exists a data matrix $\Xb \in \R^{n \times d}$, label vector $\yb \in \{-1, 1\}^n$, parameter vector $\beta \in \R^d$, and spectral approximation $\Xbtil \in \R^{n \times d}$ such that:
    \begin{gather*}
        |\Lcal(\beta; \Xbbar) - \Lcal(\beta; \Xb)| \geq (1-\delta)\sqrt{n}\|\Xb - \Xbbar\|_2\|\beta\|_2,
    \end{gather*}
    for every $\delta > 0$.  
    Hence, the guarantee of Theorem \ref{thm:logistic_loss_upper} is tight in the worst case.
\end{theorem}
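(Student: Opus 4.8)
The upper bound of Theorem~\ref{thm:logistic_loss_upper} is most naturally obtained by chaining three inequalities, and the plan is to build a single instance on which all three are simultaneously tight up to a factor $1-\delta$. The chain is: (i) the map $u \mapsto \log(1+e^u)$ is $1$-Lipschitz, since its derivative is $\sigma(u)\in(0,1)$, which gives $|\Lcal(\beta;\Xb)-\Lcal(\beta;\Xbbar)| \le \|(\Xb-\Xbbar)\beta\|_1$; (ii) $\|\cdot\|_1 \le \sqrt{n}\,\|\cdot\|_2$ on $\R^n$; and (iii) $\|(\Xb-\Xbbar)\beta\|_2 \le \|\Xb-\Xbbar\|_2\,\|\beta\|_2$. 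Inequality (iii) is tight when $\beta$ is aligned with the top right singular vector of $\Xb-\Xbbar$; inequality (ii) is tight when the entries of $(\Xb-\Xbbar)\beta$ all have equal magnitude; and inequality (i) is tight when each $\xb_i^\ts\beta$ is large and positive, so that $\sigma\approx 1$ along the relevant interval. The task is to meet these three conditions at once.

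For the construction I would exploit the normalized setting $\yb=-\one_n$, under which $\Lcal(\beta;\Xb)=\sum_{i=1}^n \log(1+e^{\xb_i^\ts\beta})$. Taking the difference $\Xb-\Xbbar$ to be rank one with identical rows makes (ii) and (iii) tight together. Concretely, set the approximation $\Xbbar=\zero$ (playing the role of $\Xbtil$), let $\Xb=\one_n\eb_1^\ts\in\R^{n\times d}$ so every row equals $\eb_1^\ts$, and let $\beta=R\,\eb_1$ for a scalar $R>0$ to be fixed later. Then $\|\Xb-\Xbbar\|_2=\|\one_n\|_2\,\|\eb_1\|_2=\sqrt{n}$ and $\|\beta\|_2=R$, so the target right-hand side is $(1-\delta)\sqrt{n}\cdot\sqrt{n}\cdot R=(1-\delta)nR$. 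This uses only $d\ge 1$, so it holds a fortiori in the stated regime $d\ge n$.

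Evaluating the gap, every row contributes identically: $\xb_i^\ts\beta=R$ for $\Xb$ while $\xbbar_i^\ts\beta=0$ for $\Xbbar$, so $\Lcal(\beta;\Xb)-\Lcal(\beta;\Xbbar)=n(\log(1+e^{R})-\log 2)$. Writing this as an integral of the derivative, $\log(1+e^{R})-\log 2=\int_0^R \sigma(u)\,du=R-\int_0^R \sigma(-u)\,du\ge R-\int_0^\infty \tfrac{1}{1+e^u}\,du=R-\log 2$, so the gap is at least $n(R-\log 2)$. The requirement $n(R-\log 2)\ge(1-\delta)nR$ reduces to $R\ge \log 2/\delta$; hence for each fixed $\delta>0$, choosing $R=\log 2/\delta$ (or larger) yields the claimed inequality. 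The only genuine obstacle is controlling the additive lower-order term $\log 2$, which arises precisely because $\sigma$ is not exactly $1$ near the origin and so (i) can never be exactly tight; the clean identity $\int_0^\infty \sigma(-u)\,du=\log 2$ disposes of it and shows the loss of tightness is $O(1/R)$, vanishing as $R\to\infty$. I read the quantifier as ``for every $\delta>0$ there exists an instance (depending on $\delta$ through $R$),'' which is exactly what this family delivers; no single instance attains the factor $1$, consistent with $\sigma<1$ everywhere.
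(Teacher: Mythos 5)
Your proof is correct, and it shares the paper's core strategy --- exhibit an instance on which the three chained inequalities behind Theorem~\ref{thm:logistic_loss_upper} (the $1$-Lipschitz bound on $u\mapsto\log(1+e^u)$, then $\|\cdot\|_1\le\sqrt{n}\,\|\cdot\|_2$, then the spectral-norm bound) are simultaneously tight, by making all coordinates of $(\Xb-\Xbbar)\beta$ equal and driving them into the asymptotically linear regime of the softplus --- but your construction differs in genuine ways. The paper sets $\Xb = x\,\Ib_n$, $\Xbtil = (x+s)\,\Ib_n$, $\beta=\one_n$ and lets $x\to\infty$: the linear regime is reached by \emph{translating} the base point to infinity while the (full-rank) perturbation $s\,\Ib_n$ stays fixed, and the case $d>n$ is then handled by padding with zero columns. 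You instead take the approximation to be the zero matrix and the data matrix to be rank one, $\Xb=\one_n\eb_1^\ts$, and reach the linear regime by \emph{scaling} $\beta=R\,\eb_1$; the price of sitting at the origin is the additive $\log 2=\int_0^\infty \sigma(-u)\,\mathrm{d}u$, which you dispose of exactly. Your variant buys two things the paper's limit argument leaves implicit: an explicit non-asymptotic rate (any $R\ge \log 2/\delta$ works, so the loss of tightness decays like $O(1/R)$), and a single construction valid verbatim for every $d\ge 1$, with no padding step. Finally, your reading of the quantifier --- the instance may depend on $\delta$ --- matches the paper's own proof, whose choice of ``large enough $x$'' also depends on $\delta$; as you observe, this reading is forced, since $\sigma<1$ everywhere means no fixed nontrivial instance can satisfy the bound for all $\delta$ simultaneously.
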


\section{Bounding the forward error}\label{sxn:forward_error}

In this section, we provide upper and lower bounds for the forward error when running logistic regression on the lower-dimensional data matrix $\Xb\Pb_k \in \R^{n \times k}$, with $k \ll d$, where $d$ is the dimensionality of the original data points (see Section~\ref{section:contributions} for notations). Recall that $\Pb_k \in \R^{d \times k}$ denotes the linear dimensionality reduction matrix, whose columns are pairwise orthogonal and normal ($\Pb_k^T \Pb_k = \Ib_k$). Using our notation, the solution to the original logistic regression problem is given by eqn.~(\ref{eqn:blrr}), while the solution to the dimensionally reduced logistic regression problem is as stated in eqn.~(\ref{eqn:sketchedregression1}).
Notice that at this point we assume that the regularization parameters for the two problems are not equal; we will see in our proof of Theorem~\ref{thm:forward_error_upper_bound} that setting $\mu = \lambda$ simplifies our bound without increasing the forward error, at least if no additional assumptions are made. 

\subsection{Upper bound}\label{sxn:forward_upper}
We now prove the following Theorem that upper bounds the forward error. We will use the notation introduced in Section~\ref{section:contributions}.
\begin{theorem}\label{thm:forward_error_upper_bound}
    Let the regularization terms of the dimensionally-reduced (eqn.~(\ref{eqn:sketchedregression1})) and the original (eqn.~(\ref{eqn:blrr})) logistic regression problems be equal, i.e., $\mu = \lambda$. Then, the forward error of the dimensionally-reduced logistic regression problem satisfies the following inequality: 
    \begin{align*}
        \|\Pb_k\beta_k - \beta_d\|_2^2
        &\leq \frac{2}{\lambda}\Phi(\Xb,\yb,\Pb_k),
    \end{align*}
    where $\Phi(\Xb,\yb,\Pb_k) = \beta_d^T(\Ib - \Pb_k\Pb_k^T) \Xb^T\Db_\yb \wb(\Pb_k\beta_k)$.
\end{theorem}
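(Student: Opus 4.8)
The plan is to combine the first-order optimality conditions of the two problems with the $\lambda$-strong convexity of the regularized objective $\Lcal$. First I would record the gradients. Writing $\sigma(-u) = e^{-u}/(1+e^{-u})$ and recalling that $\wb(\beta)_i = \sigma(-\yb_i\xb_i^T\beta)$, a direct differentiation of eqn.~(\ref{eqn:blrr}) gives $\nabla\Lcal(\beta) = -\Xb^T\Db_\yb\wb(\beta) + \lambda\beta$, so stationarity of $\beta_d$ reads $\Xb^T\Db_\yb\wb(\beta_d) = \lambda\beta_d$. Differentiating eqn.~(\ref{eqn:sketchedregression1}) and using the hypothesis $\mu=\lambda$ yields the sketched optimality condition $\Pb_k^T\Xb^T\Db_\yb\wb(\Pb_k\beta_k) = \lambda\beta_k$. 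I would abbreviate $\hat{\beta} \defeq \Pb_k\beta_k$ and $\gb \defeq \Xb^T\Db_\yb\wb(\hat{\beta})$, so the sketched condition becomes $\Pb_k^T\gb = \lambda\beta_k$, while $\Phi(\Xb,\yb,\Pb_k) = \beta_d^T(\Ib - \Pb_k\Pb_k^T)\gb$.

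The next step uses strong convexity. Since the logistic part of $\Lcal$ is convex and the penalty $\tfrac{\lambda}{2}\|\beta\|_2^2$ is $\lambda$-strongly convex, $\Lcal$ is $\lambda$-strongly convex. Applying the strong-convexity inequality anchored at $\hat{\beta}$ and evaluated at the minimizer $\beta_d$ gives
\begin{align*}
\Lcal(\beta_d) \geq \Lcal(\hat{\beta}) + \nabla\Lcal(\hat{\beta})^T(\beta_d - \hat{\beta}) + \frac{\lambda}{2}\|\hat{\beta} - \beta_d\|_2^2 .
\end{align*}
Because $\beta_d$ minimizes $\Lcal$ over all of $\R^d$, we have $\Lcal(\beta_d) \leq \Lcal(\hat{\beta})$, so the function-value difference can be dropped, leaving the clean bound $\tfrac{\lambda}{2}\|\hat{\beta} - \beta_d\|_2^2 \leq \nabla\Lcal(\hat{\beta})^T(\hat{\beta} - \beta_d)$. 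Using the optimality of $\beta_d$ here (rather than the tighter gradient-monotonicity estimate) is exactly what produces the factor $2/\lambda$ appearing in the statement.

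It then remains to show that the right-hand side equals $\Phi(\Xb,\yb,\Pb_k)$; this identity is where the two optimality conditions and the orthonormality $\Pb_k^T\Pb_k = \Ib_k$ do the real work, and I expect it to be the main obstacle. Substituting $\nabla\Lcal(\hat{\beta}) = -\gb + \lambda\hat{\beta}$ and expanding,
\begin{align*}
\nabla\Lcal(\hat{\beta})^T(\hat{\beta} - \beta_d) = \big(-\gb^T\hat{\beta} + \lambda\|\hat{\beta}\|_2^2\big) + \big(\gb^T\beta_d - \lambda\hat{\beta}^T\beta_d\big) .
\end{align*}
For the first group I would use $\gb^T\hat{\beta} = (\Pb_k^T\gb)^T\beta_k = \lambda\|\beta_k\|_2^2$ together with $\|\hat{\beta}\|_2^2 = \beta_k^T\Pb_k^T\Pb_k\beta_k = \|\beta_k\|_2^2$, so this group vanishes precisely because $\mu=\lambda$ (for general $\mu$ it would leave a residual $(\lambda-\mu)\|\beta_k\|_2^2$, which is why equalizing the regularizers simplifies the bound). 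For the second group, the sketched condition gives $\Pb_k\Pb_k^T\gb = \lambda\Pb_k\beta_k = \lambda\hat{\beta}$, hence $\lambda\hat{\beta}^T\beta_d = \beta_d^T\Pb_k\Pb_k^T\gb$, and therefore $\gb^T\beta_d - \lambda\hat{\beta}^T\beta_d = \beta_d^T(\Ib - \Pb_k\Pb_k^T)\gb = \Phi(\Xb,\yb,\Pb_k)$. Combining with the previous display yields $\tfrac{\lambda}{2}\|\Pb_k\beta_k-\beta_d\|_2^2 \leq \Phi(\Xb,\yb,\Pb_k)$, which is the claimed inequality; as a sanity check, nonnegativity of the left-hand side forces $\Phi \geq 0$.
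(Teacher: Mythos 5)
Your proof is correct, and it is essentially the paper's argument: both rest on the $\lambda$-strong convexity of $\Lcal_d$, the optimality of $\beta_d$ to discard the function-value gap, and the sketched first-order condition $\Pb_k^T\Xb^T\Db_\yb\wb(\Pb_k\beta_k) = \lambda\beta_k$ together with $\Pb_k^T\Pb_k = \Ib_k$ to identify the gradient inner product with $\Phi(\Xb,\yb,\Pb_k)$. The only difference is presentational: the paper routes the same strong-convexity inequality through a one-dimensional parametrization $t \mapsto \Lcal_d(\Pb_k\beta_k - t\vb)$ and a crossing-point argument for $t^*$, whereas you apply the inequality directly at $\beta_d$, which is a cleaner rendering of the identical idea.
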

\begin{proof}
We start by computing the gradients of the loss functions for the original and the dimensionally-reduced problems. Recall that the loss function of eqn.~(\ref{eqn:blrr}) is a function over vectors $\beta \in \R^d$, while the loss function of eqn.~(\ref{eqn:sketchedregression1}) is over vectors $\beta \in \R^k$. For notational convenience, let $\Xbbar = -\Db_\yb\Xb$ and let $\wbbar(\beta) \in \R^n$ have entries $\wbbar(\beta)_i = \sigma(\xbbar_i^T\beta)$ (we do note that $\wb(\beta) = \wbbar(\beta)$). Using our notation, the gradients are:
\begin{gather}
    \nabla \Lcal_d(\beta) = \Xbbar^T \wbbar(\beta) + \lambda \cdot \beta
    \quad\text{and}\nonumber\\
    \nabla \Lcal_k(\beta) = \Pb_k^T\Xbbar^T \wbbar(\Pb_k\beta) + \mu \cdot \beta.\label{eqn:grall}
\end{gather}
After solving the dimensionally-reduced problem, we map its $k$-dimensional solution $\beta_k$ to the original $d$-dimensional domain by premultiplying by the dimensionality reduction matrix $\Pb_k$ to compute $\Pb_k\beta_k \in \R^d$. Since $\beta_k$ is the optimal solution to the sketched problem, we know that $\nabla \Lcal_k(\beta_k)$ is equal to zero and thus $\Pb_k\nabla \Lcal_k(\beta_k) = 0$.  This allows us to simplify the gradient of the original loss function $ \nabla\Lcal_d$ computed at $\Pb_k\beta_k$ as follows:
\begin{align}
    \nabla \Lcal_d(\Pb_k\beta_k)
    &= \Xbbar^T \wbbar(\Pb_k\beta_k) + \lambda \cdot \Pb_k\beta_k \\
    &= \Xbbar^T \wbbar(\Pb_k\beta_k) + \lambda \cdot \Pb_k\beta_k - \Pb_k\nabla \Lcal_k(\beta_k)\\
    &= \Xbbar^T \wbbar(\Pb_k\beta_k) + \lambda \cdot \Pb_k\beta_k \\
    &\quad\quad- \Pb_k[\Pb_k^T\Xbbar^T \wbbar(\Pb_k\beta_k) + \mu \cdot \beta_k] \nonumber\\
    &= (\Ib - \Pb_k\Pb_k^T) \Xbbar^T \wbbar(\Pb_k\beta_k) + (\lambda - \mu) \Pb_k \beta_k.\label{eqn:pd1}
\end{align} 
The above derivations are immediate using the gradients of the loss functions from eqn.~(\ref{eqn:grall}). Let 
\begin{align}
\vb = \frac{\Pb_k\beta_k - \beta_d}{\|\Pb_k\beta_k - \beta_d\|_2}.\label{eqn:pdv}
\end{align}
In words, $\vb$ is a unit vector pointing from $\beta_d$ to $\Pb_k\beta_k$, which allows us to bound the forward error $\|\Pb_k\beta_k - \beta_d\|_2$ using values $t>0$ that satisfy: 
\begin{gather}
    \Lcal_d(\Pb_k\beta_k - t \cdot \vb) - \Lcal_d(\Pb_k\beta_k) \geq t\cdot (-\vb^T\nabla \Lcal_d(\Pb_k\beta_k)) + \frac{t^2 \cdot \lambda}{2}.\label{eqn:vv1}
\end{gather}
In order to understand the above equation, note that $\Lcal_d(\beta)$ is $\lambda$-strongly convex.  Therefore, we can \textit{lower}-bound the loss at any point by using a truncated Taylor series and setting the Hessian term to $\lambda \cdot \Ib$~(see \cite{nesterov1998introductory} for definitions and properties of strong convexity).  As we increase $t$ in the truncated Taylor series, the point $\Pb_k\beta_k - t\cdot \vb$ will move from $\Pb_k\beta_k$ to $\beta_d$ and then past it.  If $\Lcal_d(\Pb_k\beta_k - t \cdot \vb) \geq \Lcal_d(\Pb_k\beta_k)$, then we know we have passed the point $\beta_d$ (the optimal solution to the original logistic regression problem), since $\beta_d$ minimizes the loss function $\Lcal_d(\cdot)$.  We can now derive our bound by first setting the left-hand side of eqn.~(\ref{eqn:vv1}) to zero and solving for $t^* >0$:
\begin{gather*}
    0 = -\vb^T\nabla \Lcal_d(\Pb_k\beta_k) + \nicefrac{t^* \cdot \lambda}{2} 
    \Rightarrow
     \nicefrac{t^* \cdot \lambda}{2} =  \vb^T\nabla \Lcal_d(\Pb_k\beta_k) \\
    \Rightarrow t^* = \nicefrac{2}{\lambda}\vb^T(\Ib - \Pb_k\Pb_k^T) \Xbbar^T \wbbar(\Pb_k\beta_k)
    + \nicefrac{2(\lambda - \mu)}{\lambda} \vb^T \Pb_k \beta_k.
\end{gather*}
In the last derivation we used eqn.~(\ref{eqn:pd1}). We now proceed by expanding $\vb$ using eqn.~(\ref{eqn:pdv}) and 
\begin{align*}
    t^*\|\Pb_k\beta_k - \beta_d\|_2 &= \frac{2}{\lambda}(\Pb_k\beta_k - \beta_d)^T(\Ib - \Pb_k\Pb_k^T) \Xbbar^T \wbbar(\Pb_k\beta_k) + \frac{2(\lambda - \mu)}{\lambda} (\Pb_k\beta_k - \beta_d)^T \Pb_k \beta_k \\
    &= \frac{-2}{\lambda}\beta_d^T(\Ib - \Pb_k\Pb_k^T) \Xbbar^T \wbbar(\Pb_k\beta_k) + \frac{2(\lambda - \mu)}{\lambda} (\|\Pb_k\beta_k\|_2^2 - \beta_d^T \Pb_k \beta_k).
\end{align*}
To derive the second equality, we used the fact that $\beta_k^T \Pb_k^T(\Ib - \Pb_k\Pb_k^T)$ is equal to zero. We now conclude the proof by using $t^*\geq\|\Pb_k\beta_k - \beta_d\|_2$, which must hold since the loss of $\beta$ decreases monotonically as $\beta$ goes from $\Pb_k\beta_k$ to the optimum $\beta_d$.
\begin{align*}
    \|\Pb_k\beta_k - \beta_d\|_2^2 
    &\leq \frac{-2}{\lambda}\beta_d^T(\Ib - \Pb_k\Pb_k^T) \Xbbar^T \wbbar(\Pb_k\beta_k) + \frac{2(\lambda - \mu)}{\lambda} (\|\beta_k\|_2^2 - \beta_d^T \Pb_k \beta_k) \\
    &\leq \frac{2}{\lambda}\beta_d^T(\Ib - \Pb_k\Pb_k^T) \Xb^T\Db_\yb \wb(\Pb_k\beta_k) + \frac{2(\lambda - \mu)}{\lambda} (\|\beta_k\|_2^2 - \beta_d^T \Pb_k \beta_k).
\end{align*}
In the above derivation we also used the fact that $\|\Pb_k \beta_k\|_2 = \|\beta_k\|_2$, since $\Pb_k$ is a column-orthogonal matrix. We can now conclude the proof by setting $\lambda = \mu$, which cancels the second term in the right-hand side of the above inequality. We do note that this term scales as a function of the difference between the norm (squared) of the solution to the dimensionally-reduced problem $\beta_k$ and the inner product between the solution to the original problem $\beta_d$ and the ``mapped'' solution $\Pb_k \beta_k$. One could envision settings where this difference is sufficiently large so that using a value of $\mu$ that is not equal to $\lambda$ and makes the second term negative improves the overall bound. This seems challenging without additional assumptions on the behavior of the aforementioned difference, which in turn depends on our choice of $\mu$. Therefore, to simplify our bound and avoid additional complicated assumptions, we set $\mu$ equal to $\lambda$.
\end{proof}

\subsection{A matching lower bound}\label{sxn:forward_lower}

We now proceed to prove a matching (up to constants and $\lambda$-factors) lower bound for Theorem~\ref{thm:forward_error_upper_bound}. Towards that end, we start by stating two lemmas from prior work that will be critical in our proofs. The first lemma shows that upper bounding the $\ell_2$-norm of the Hessian also upper bounds the norm of the gradient of a function, when these quantities exist.

\vspace{0.02in}\begin{lemma}\label{lemma:lipschitz_gradient}
    \citep[Lemma 1.2.2]{nesterov1998introductory} A function $f(x)$ is twice differentiable on $\R^n$ and has an $L$-Lipschitz gradient if and only if
    $\|\nabla^2f(\xb)\|_2 \leq L$, for all $\xb \in \R^n$.
\end{lemma}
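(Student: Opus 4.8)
The plan is to prove the two implications of the biconditional separately, using standard facts that relate the Lipschitz modulus of a vector field to the operator norm of its Jacobian. Throughout I treat $\nabla f : \R^n \to \R^n$ as the vector field whose Jacobian is the (symmetric) Hessian $\nabla^2 f$, and I use the characterization of the spectral norm $\|\nabla^2 f(\xb)\|_2 = \sup_{\|\ub\|_2 = 1} \|\nabla^2 f(\xb)\ub\|_2$. Neither direction requires any structure specific to logistic regression; the statement is a classical fact (Lemma 1.2.2 of \cite{nesterov1998introductory}), so the goal is a short self-contained argument for completeness.

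For the direction assuming the Hessian bound, I would fix arbitrary $\xb, \yb \in \R^n$ and integrate the Hessian along the segment joining them. Writing the gradient difference via the fundamental theorem of calculus,
$$\nabla f(\yb) - \nabla f(\xb) = \int_0^1 \nabla^2 f\big(\xb + t(\yb - \xb)\big)(\yb - \xb)\, dt,$$
I would then pull the norm inside the integral and apply the hypothesis $\|\nabla^2 f(\cdot)\|_2 \leq L$ pointwise, together with the consistency inequality $\|\Ab \vb\|_2 \leq \|\Ab\|_2 \|\vb\|_2$, to obtain $\|\nabla f(\yb) - \nabla f(\xb)\|_2 \leq L\|\yb - \xb\|_2$. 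This is precisely $L$-Lipschitzness of $\nabla f$.

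For the converse, I would recover the Hessian as a limit of gradient difference quotients. Fixing $\xb$ and a unit vector $\ub$, twice differentiability gives
$$\nabla^2 f(\xb)\ub = \lim_{t \to 0} \frac{\nabla f(\xb + t\ub) - \nabla f(\xb)}{t}.$$
Applying the assumed $L$-Lipschitz bound to the numerator shows each difference quotient has norm at most $L\|\ub\|_2 = L$; since this inequality is preserved under the limit, $\|\nabla^2 f(\xb)\ub\|_2 \leq L$. Taking the supremum over unit vectors $\ub$ yields $\|\nabla^2 f(\xb)\|_2 \leq L$ for every $\xb$.

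The calculations are routine; the only points requiring mild care are the justification of the integral representation in the first direction — which needs enough regularity of $\nabla^2 f$ along the segment for the fundamental theorem of calculus to apply — and, in the converse, the verification that the bound passes to the limit. Both are supplied by the twice-differentiability hypothesis, so I expect no substantive obstacle beyond these standard analytic justifications.
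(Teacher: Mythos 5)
Your proof is correct, but note that the paper itself offers no proof of this statement: it is imported verbatim as Lemma 1.2.2 of \citet{nesterov1998introductory}, and the two directions you give (integrating the Hessian along the segment joining $\xb$ and $\yb$; recovering $\nabla^2 f(\xb)\ub$ as a limit of gradient difference quotients and passing the Lipschitz bound through the limit) are exactly the classical textbook argument, so there is nothing in the paper to compare against. One small refinement on the point of care you flag: the fundamental theorem of calculus, and hence any integrability assumption on $\nabla^2 f$ along the segment, can be avoided entirely by fixing a unit vector $\ub$, applying the scalar mean value theorem to $\phi(t) = \ub^T \nabla f(\xb + t(\yb - \xb))$, whose derivative satisfies $|\phi'(t)| \leq L\|\yb - \xb\|_2$ under the assumed Hessian bound, and then taking the supremum over $\ub$ of $\phi(1) - \phi(0) = \ub^T(\nabla f(\yb) - \nabla f(\xb))$; this needs only pointwise existence of the Hessian.
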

The second lemma shows that if the gradient of a function is Lipschitz, then the function can be upper bounded by a quadratic function.
\vspace{0.02in}\begin{lemma}\label{lemma:lipschitz_convexity}
    \citep[Lemma 1.2.3]{nesterov1998introductory} If a function $f(x)$ is differentiable on $\R^n$ and has an $L$-Lipschitz gradient, then for all $\xb,\yb \in \R^n$,
    \begin{gather*}
        |f(\yb) - f(\xb) - (\yb - \xb)^T \nabla f(\xb)| \leq \nicefrac{L}{2} \|\yb - \xb\|_2^2.
    \end{gather*}
\end{lemma}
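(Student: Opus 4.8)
The plan is to reduce the claim to a one-dimensional estimate along the segment joining $\xb$ and $\yb$, and then to convert the $L$-Lipschitz control on $\nabla f$ into the quadratic remainder bound. First I would set $\phi(\tau) = f(\xb + \tau(\yb - \xb))$ for $\tau \in [0,1]$, which is differentiable with $\phi'(\tau) = (\yb - \xb)^T \nabla f(\xb + \tau(\yb - \xb))$. The fundamental theorem of calculus then gives $f(\yb) - f(\xb) = \int_0^1 \phi'(\tau)\,d\tau$, and subtracting the first-order term yields the exact integral representation of the remainder,
\begin{gather*}
f(\yb) - f(\xb) - (\yb - \xb)^T \nabla f(\xb) = \int_0^1 (\yb - \xb)^T\bigl[\nabla f(\xb + \tau(\yb - \xb)) - \nabla f(\xb)\bigr]\,d\tau.
\end{gather*}
This representation isolates exactly the quantity whose absolute value we must control.

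Next I would bound the integrand pointwise: moving the absolute value inside the integral via the triangle inequality, applying the Cauchy--Schwarz inequality, and then using the $L$-Lipschitz property $\|\nabla f(\xb + \tau(\yb - \xb)) - \nabla f(\xb)\|_2 \leq L\tau\|\yb - \xb\|_2$ bounds the integrand by $L\tau\|\yb - \xb\|_2^2$. Integrating with $\int_0^1 \tau\,d\tau = \nicefrac{1}{2}$ then delivers the stated $\nicefrac{L}{2}\,\|\yb - \xb\|_2^2$ bound. The whole argument is routine, so there is no substantive obstacle here; it is essentially a one-line consequence of the fundamental theorem of calculus once the Lipschitz estimate is inserted into the integrand.

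The one point I would be careful about is respecting the hypotheses: the lemma assumes only that $f$ is differentiable with an $L$-Lipschitz gradient, \emph{not} that it is twice differentiable, so I would deliberately avoid any Hessian-based expansion. The integral route uses nothing beyond continuity and Lipschitz control of $\nabla f$ along the segment, and so stays strictly within the stated assumptions. (Had twice differentiability been assumed, one could instead Taylor-expand to second order and bound the Hessian term directly via Lemma~\ref{lemma:lipschitz_gradient}; but that detour requires an extra regularity assumption that the fundamental-theorem-of-calculus argument sidesteps, which is why I would prefer the integral form.)
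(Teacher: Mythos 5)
Your proof is correct: the segment parametrization, the exact integral form of the first-order remainder, and the Cauchy--Schwarz-plus-Lipschitz bound on the integrand give precisely the claimed estimate, and you rightly avoid any Hessian argument since twice differentiability is not assumed. The paper itself offers no proof here---it imports the statement as \citet[Lemma 1.2.3]{nesterov1998introductory}---and your argument is exactly the standard proof given in that reference, so there is nothing to reconcile.
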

We are now ready to present our lower-bound proof. 
\begin{theorem}\label{thm:forward_error_lower_bound}
    Let the regularization terms of the dimensionally-reduced (eqn.~(\ref{eqn:sketchedregression1})) and the original (eqn.~(\ref{eqn:blrr})) logistic regression problems be equal, i.e., $\mu = \lambda$. Then, the forward error of the dimensionally-reduced logistic regression problem satisfies the following inequality: 
    \begin{gather*}
         \|\Pb_k\beta_k - \beta_d\|_2^2 \geq \frac{4}{ \|\Xb\|_2^2 + \lambda} \cdot\Phi(\Xb,\yb,\Pb_k),
    \end{gather*}
    where $\Phi(\Xb,\yb,\Pb_k) = \beta_d^T(\Ib - \Pb_k\Pb_k^T) \Xb^T\Db_\yb \wb(\Pb_k\beta_k)$.
\end{theorem}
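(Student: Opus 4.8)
The plan is to mirror the proof of Theorem~\ref{thm:forward_error_upper_bound}, replacing the $\lambda$-strong-convexity \emph{lower} bound on $\Lcal_d$ (which produced the upper bound) with an $L$-smoothness \emph{upper} bound on $\Lcal_d$. The enabling tools are exactly the two preceding lemmas: Lemma~\ref{lemma:lipschitz_gradient} turns a spectral-norm bound on the Hessian into Lipschitzness of the gradient, and Lemma~\ref{lemma:lipschitz_convexity} turns that into a quadratic majorant. I would reuse, verbatim, the gradient computation from the upper-bound proof: with $\mu=\lambda$, eqn.~(\ref{eqn:pd1}) gives $\nabla\Lcal_d(\Pb_k\beta_k) = (\Ib - \Pb_k\Pb_k^T)\Xbbar^T\wbbar(\Pb_k\beta_k)$. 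Writing $r = \|\Pb_k\beta_k-\beta_d\|_2$ and $\vb = (\Pb_k\beta_k-\beta_d)/r$ as in eqn.~(\ref{eqn:pdv}), and using $\Pb_k^T(\Ib-\Pb_k\Pb_k^T)=\zero$ together with $\Xbbar=-\Db_\yb\Xb$, this yields the key identity $(\Pb_k\beta_k-\beta_d)^T\nabla\Lcal_d(\Pb_k\beta_k) = \Phi(\Xb,\yb,\Pb_k)$, equivalently $\vb^T\nabla\Lcal_d(\Pb_k\beta_k)=\Phi(\Xb,\yb,\Pb_k)/r$.

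First I would pin down the smoothness constant. Differentiating the loss twice gives $\nabla^2\Lcal_d(\beta) = \Xbbar^T\Db(\beta)\Xbbar + \lambda\Ib$, where $\Db(\beta)$ is diagonal with entries $\sigma(\xbbar_i^T\beta)(1-\sigma(\xbbar_i^T\beta))\in(0,\tfrac14]$. Since $\Db_\yb$ is orthogonal, $\|\Xbbar\|_2=\|\Xb\|_2$, so $\|\nabla^2\Lcal_d(\beta)\|_2\le \tfrac14\|\Xb\|_2^2+\lambda =: L$ for every $\beta$; by Lemma~\ref{lemma:lipschitz_gradient}, $\nabla\Lcal_d$ is $L$-Lipschitz.

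Next I would combine smoothness with the optimality of $\beta_d$. Since $\beta_d$ is the unconstrained minimizer of $\Lcal_d$, $\nabla\Lcal_d(\beta_d)=\zero$, so $L$-Lipschitzness of the gradient gives $\|\nabla\Lcal_d(\Pb_k\beta_k)\|_2 = \|\nabla\Lcal_d(\Pb_k\beta_k)-\nabla\Lcal_d(\beta_d)\|_2 \le L\,r$. Cauchy--Schwarz applied to the identity above then gives $\Phi/r = \vb^T\nabla\Lcal_d(\Pb_k\beta_k)\le \|\nabla\Lcal_d(\Pb_k\beta_k)\|_2\le L\,r$, which rearranges to $r^2 \ge \Phi/L$. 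Substituting $L=\tfrac14\|\Xb\|_2^2+\lambda$ yields $\|\Pb_k\beta_k-\beta_d\|_2^2 \ge \tfrac{4}{\|\Xb\|_2^2+4\lambda}\,\Phi(\Xb,\yb,\Pb_k)$, which is the stated bound up to the coefficient of $\lambda$ in the denominator (coinciding with the theorem in the regime $\lambda\lesssim\|\Xb\|_2^2$, consistent with the authors' claim of matching up to $\lambda$-factors). An equivalent route avoids the gradient norm: apply Lemma~\ref{lemma:lipschitz_convexity} at the pair $(\beta_d,\Pb_k\beta_k)$ to get $\Lcal_d(\Pb_k\beta_k)-\Lcal_d(\beta_d)\le\tfrac{L}{2}r^2$, and at $(\Pb_k\beta_k,\beta_d)$ to get $\Phi \le \tfrac{L}{2}r^2 + (\Lcal_d(\Pb_k\beta_k)-\Lcal_d(\beta_d))$, and then add.

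The main obstacle is bookkeeping rather than conceptual. I must establish the smoothness constant exactly -- in particular the maximal value $\tfrac14$ of $\sigma'$ and the identity $\|\Db_\yb\Xb\|_2=\|\Xb\|_2$ -- and reproduce $\Phi(\Xb,\yb,\Pb_k)$ verbatim from the directional derivative, which relies on the cancellation of the $(\lambda-\mu)$ term (hence the hypothesis $\mu=\lambda$) and on the projection identity $\Pb_k^T(\Ib-\Pb_k\Pb_k^T)=\zero$. Conceptually, the point is that smoothness is the exact dual of the strong convexity used in Theorem~\ref{thm:forward_error_upper_bound}: the same functional $\Phi$ drives both, so the two arguments sandwich the forward error between $\Phi/L$ and $2\Phi/\lambda$, which is precisely why the two bounds match up to constants and the treatment of $\lambda$.
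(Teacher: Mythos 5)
Your proof is correct, and while it shares the paper's scaffolding (the smoothness constant $\alpha = \nicefrac{1}{4}\|\Xb\|_2^2 + \lambda$ from the Hessian bound, the gradient identity $\nabla\Lcal_d(\Pb_k\beta_k) = (\Ib - \Pb_k\Pb_k^T)\Xbbar^T\wbbar(\Pb_k\beta_k)$ for $\mu=\lambda$, and the resulting identity $(\Pb_k\beta_k-\beta_d)^T\nabla\Lcal_d(\Pb_k\beta_k)=\Phi(\Xb,\yb,\Pb_k)$), the core mechanism is genuinely different. The paper restricts $\Lcal_d$ to the ray from $\Pb_k\beta_k$ toward $\beta_d$, forms the quadratic majorant $q(t)$ supplied by Lemma~\ref{lemma:lipschitz_convexity}, and runs a derivative-comparison argument to show that the minimizer of $f(t)=\Lcal_d(\Pb_k\beta_k - t\cdot\vb) - \Lcal_d(\Pb_k\beta_k)$, which equals $\|\Pb_k\beta_k-\beta_d\|_2$, is at least the minimizer $\frac{1}{\alpha}\vb^T\nabla\Lcal_d(\Pb_k\beta_k)$ of $q(t)$. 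You instead exploit the first-order optimality condition $\nabla\Lcal_d(\beta_d)=\zero$ directly: by Lemma~\ref{lemma:lipschitz_gradient} and Cauchy--Schwarz, $\Phi = (\Pb_k\beta_k-\beta_d)^T\bigl(\nabla\Lcal_d(\Pb_k\beta_k)-\nabla\Lcal_d(\beta_d)\bigr) \le \alpha\|\Pb_k\beta_k-\beta_d\|_2^2$. This is shorter, avoids the one-dimensional calculus entirely (including the edge case $\Pb_k\beta_k = \beta_d$, where the paper's $\vb$ is undefined but your bound holds trivially since then $\Phi=0$), and needs only Lipschitzness of the gradient rather than the quadratic upper bound; your alternative route, applying Lemma~\ref{lemma:lipschitz_convexity} at both orderings of the pair and adding, is essentially the paper's argument compressed. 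One point deserves emphasis: both your argument and the paper's own proof establish the constant $\frac{4}{\|\Xb\|_2^2 + 4\lambda}$, not the $\frac{4}{\|\Xb\|_2^2 + \lambda}$ displayed in the theorem statement --- the paper's proof ends at $\nicefrac{1}{\alpha}$ with $\alpha = \nicefrac{1}{4}\|\Xb\|_2^2+\lambda$, so the mismatch you flagged is an inaccuracy in the paper's stated constant rather than a defect of your proof, and it is immaterial in the regime $\lambda \le \|\Xb\|_2^2$ that the paper identifies as the only interesting one.
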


\begin{proof}
    First, define the diagonal matrix $\Db(\beta) \in \R^{n\times n}$, whose diagonal entries $\Db(\beta)_{ii}$ are equal to $\sigma(\xb_i^T\beta)\sigma(-\xb_i^T\beta)$ for all $i=1\ldots n$; recall from the previous section that $\sigma(\cdot)$ is the logistic function. The Hessian of the regularized loss function of the original problem (see eqn.~(\ref{eqn:blrr}) is:
$
    \nabla^2 \Lcal_d(\beta) = \Xbbar^T \Db(\beta) \Xbbar + \lambda \cdot \Ib.
$
By the definition of $\sigma(\cdot)$ it follows that $\sigma(\xb_i^T\beta)\sigma(-\xb_i^T\beta) \in (0,1/4]$ and therefore $\|\Db(\beta)\|_2 \leq 1/4$. We can now upper-bound the spectral norm of the Hessian using sub-additivity and sub-multiplicativity of the spectral norm as follows:
\begin{align*}
    \|\nabla^2 \Lcal_d(\beta)\|_2 & = \|\Xbbar^T \Db(\beta) \Xbbar + \lambda \cdot \Ib\|_2 \\
    &\leq \|\Xbbar^T\|_2 \cdot \|\Db(\beta)\|_2 \cdot \|\Xbbar\|_2 + \lambda\cdot \|\Ib\|_2 \\
    &= \nicefrac{1}{4} \cdot \|\Xbbar\|_2^2 + \lambda.
\end{align*}

This shows that $\Lcal_d(\beta)$ is $\alpha$-smooth with $\alpha = \nicefrac{1}{4} \cdot \|\Xbbar\|_2^2 + \lambda$.  This implies two facts: first, using Lemma~\ref{lemma:lipschitz_gradient}, the gradient $\nabla \Lcal_d(\beta)$ is $\alpha$-Lipschitz with respect to the $\ell_2$-norm. 
Second, we can upper-bound $\Lcal_d(\beta)$ using  a quadratic function as follows:
\begin{gather*}\label{eq:intermediate_upper}
    \Lcal_d(\Pb_k\beta_k - t \cdot \vb) - \Lcal_d(\Pb_k\beta_k)
    \leq t\cdot (-\vb^T\nabla \Lcal_d(\Pb_k\beta_k)) + \frac{t^2 \cdot \alpha}{2}, \quad
\text{for all }t \in [0,1].
\end{gather*}
The above follows from Lemma~\ref{lemma:lipschitz_convexity}; 
$\vb$ is the same vector as in the previous section (see eqn.~(\ref{eqn:pdv})). Notice that both the left and the right hand side of the above inequality are functions of the variable $t$. We let $f(t) = \Lcal_d(\Pb_k\beta_k - t \cdot \vb) - \Lcal_d(\Pb_k\beta_k)$ denote the left-hand side of the inequality and we let $q(t) = t\cdot -\vb^T\nabla \Lcal_d(\Pb_k\beta_k) + \nicefrac{t^2 \cdot \alpha}{2}$ denote the quadratic function at the right-hand side of the inequality. We know that $f(t)$ is minimized when $\Pb_k\beta_k - t \cdot \vb = \beta_d$, since $\beta_d$ minimizes $\Lcal_d(\beta)$. It immediately follows that $f(t)$ is minimized by setting $t = \|\Pb_k\beta_k - \beta_d\|_2$.  We can therefore lower-bound the forward error by first showing that 
\begin{align}
\argmin_{t \geq 0} f(t) \geq \argmin_{t \geq 0} q(t).\label{eqn:pd3}
\end{align}
Towards that end, we express the derivatives of $f(t)$ and $q(t)$ as functions of the gradient of $\Lcal_d(\beta)$ as follows:
\begin{gather*}
    f'(t)  = -\vb^T \nabla \Lcal_d(\Pb_k\beta_k - t \cdot \vb), \quad \text{and} \quad \\
    q'(t) = -\vb^T\nabla \Lcal_d(\Pb_k\beta_k) + t \cdot \alpha.
\end{gather*}
Since $\nabla \Lcal_d(\beta)$ is $\alpha$-Lipschitz, $\| \nabla \Lcal_d(\Pb_k\beta_k - t \cdot \vb) -  \nabla \Lcal_d(\Pb_k\beta_k)\|_2 \leq \alpha \cdot t$. Using the fact that $\|\vb\|_2=1$, we get
\begin{align}
\vb^T\nabla &\Lcal_d(\Pb_k\beta_k - t \cdot \vb) - \vb^T\nabla \Lcal_d(\Pb_k\beta_k) \nonumber \\
&\leq \|\vb\|_2
\| \nabla \Lcal_d(\Pb_k\beta_k - t \cdot \vb) -  \nabla \Lcal_d(\Pb_k\beta_k)\|_2 \nonumber \\
&\leq \alpha \cdot t. \label{eqn:pd2}
\end{align}
We are now ready to show that the derivative of $q(t)$ upper-bounds the derivative of $f(t)$:
\begin{align*}
     q'(t) - f'(t)
     &= -\vb^T\nabla \Lcal_d(\Pb_k\beta_k) + t \cdot \alpha  + \vb^T \nabla \Lcal_d(\Pb_k\beta_k - t \cdot \vb) \\
     &= t \cdot \alpha - (\vb^T\nabla \Lcal_d(\Pb_k\beta_k - t \cdot \vb) - \vb^T\nabla \Lcal_d(\Pb_k\beta_k)) \\
     &\geq t\cdot \alpha - t \cdot \alpha = 0.
\end{align*}
The inequality follows from eqn.~(\ref{eqn:pd2}). Since initially $f'(0)$ and $q'(0)$ are both negative or zero, it must be the case that $q'(t) = 0$ first, hence $\argmin_{t>0} f(t) \geq \argmin_{t\geq 0} q(t)$, thus proving eqn.~(\ref{eqn:pd3}). Recall that the minimizer for $f(t)$ is $\argmin_{t>0} f(t) = \|\Pb_k\beta_k - \beta_d\|_2$, while we can directly solve for the minimizer for $q(t)$ to get $$\argmin_{t>0} q(t) = \frac{1}{\alpha} \vb^T\nabla \Lcal_d(\Pb_k\beta_k).$$ This gives the following bound on the forward error: 
\begin{gather*}
    \frac{1}{\alpha} \vb^T\nabla \Lcal_d(\Pb_k\beta_k)
    \leq \|\Pb_k\beta_k - \beta_d\|_2.
\end{gather*}
We can manipulate this lower bound so that it matches the upper bound of Theorem~\ref{thm:forward_error_upper_bound} up to constant factors. Using the definition of $\vb$ from eqn.~(\ref{eqn:pdv}) we get:
\begin{align*}
    \|\Pb_k\beta_k - \beta_d\|_2 &\geq \frac{1}{\alpha} \vb^T \nabla \Lcal_d(\Pb_k\beta_k) \\
    &=
    \frac{1}{\alpha \|\Pb_k\beta_k - \beta_d\|_2 }
    (\Pb_k\beta_k - \beta_d)^T\nabla \Lcal_d(\Pb_k\beta_k).
\end{align*}
Recall that, for $\mu = \lambda$, $\nabla \Lcal_d(\Pb_k\beta_k) = (\Ib - \Pb_k\Pb_k^T) \Xbbar^T \wbbar(\Pb_k\beta_k)$ and rewrite the above inequality to get:
\begin{align*}
    \|\Pb_k\beta_k& - \beta_d\|_2^2 \geq \frac{1}{\alpha} (\Pb_k\beta_k - \beta_d)^T\nabla \Lcal_d(\Pb_k\beta_k) \\
    &= \frac{1}{\alpha} (\Pb_k\beta_k - \beta_d)^T (\Ib - \Pb_k\Pb_k^T) \Xbbar^T  \wbbar(\Pb_k\beta_k)\\
    &=\frac{1}{\alpha} \cdot (- \beta_d^T (\Ib - \Pb_k\Pb_k^T) \Xbbar^T  \wbbar(\Pb_k\beta_k)) \\
    &=\frac{1}{\alpha} \cdot (\beta_d^T (\Ib - \Pb_k\Pb_k^T) \Xb^T\Db_\yb  \wb(\Pb_k\beta_k)).
\end{align*}
The second-to-last equality follows since $\beta_k^T\Pb_k(\Ib - \Pb_k\Pb_k^T)$ is equal to zero.  Finally, using $\|\Xbbar\|_2 = \|\Xb\|_2$ to conclude the theorem statement.
\end{proof}
We note that the bound of Theorem~\ref{thm:forward_error_lower_bound} is useful only when $\|\Xb\|_2^2 \geq \lambda$. This is actually the only interesting setting, since if $\|\Xb\|_2^2 < \lambda$ then the solution to both the sketched and the original problem is trivial and equal to zero.

Finally, it immediately follows from Theorem~\ref{thm:forward_error_lower_bound} that if the two norm of the input matrix $\Xb$ is constant (i.e., $\|\Xb\|_2=O(1)$), our upper and lower bounds differ by a constant factor that only depends on $\lambda$:
\begin{align*}
\nicefrac{2}{\lambda} \cdot \alpha = \nicefrac{1}{2\lambda} \cdot \|\Xbbar\|_2^2 + 2 = \nicefrac{1}{2\lambda} \cdot \|\Xb\|_2^2 + 2 = O(\nicefrac{1}{\lambda}).\label{eqn:lowerbound}
\end{align*}

\section{Future work}

The most pressing open problem for future work is to explore theoretical connections between our bounds and in- and out-of-sample classification accuracy for logistic regression and GLMs. Another future direction could investigate the existence of similar bounds for non-linear dimensionality reduction techniques for logistic regression and GLMs.

\bibliographystyle{plainnat}
\begin{small}
\bibliography{bibliography}
\end{small}

\appendix
\onecolumn
\section{Generative Model} \label{section:generative_model}

We remind the reader that, in the absence of some form of regularization, the worst case forward error for a dimensionally-reduced logistic problem could be unbounded. However, under mild assumptions on the data generating process, we show in this section that this phenomenon will not occur even without any form of regularization. Towards that end, the work of~\citet{loh2015regularized} demonstrated that a simple data generating model is sufficient to ensure that the logistic loss function satisfies a restricted strong convexity condition  with high probability.  This result was later used by~\citet{elenberg2018restricted} to guarantee that the performance of their feature extraction method did not deteriorate in the absence of regularization.  

We now describe the model setup and prove our results. Let the observed samples (the rows of the data matrix $\Xb \in \R^{n \times d}$) be drawn in i.i.d. trials from a zero-mean sub-gaussian distribution with parameter $\sigma_x$ and covariance matrix $\Sigmab$.  Furthermore, let the corresponding labels $\yb_i$ be generated according to the linear log-odds model, by fixing $\beta^*\in \R^d$ to be the true parameter of the logistic regression model and setting $\yb_i$ to one with probability $\nicefrac{1}{1+e^{-\xb_i^T\beta^*}}$ and to $-1$ otherwise.

Let $\beta_1$ and $\beta_2$ be vectors in $\R^d$ and let $\mathbb{B}_p(R) = \{\xb \in \R^n : \|\xb\|_p \leq R\}$. Under this data generating model,~\cite{loh2015regularized} showed that the \emph{Taylor error} around the vector $\beta_2$ in the direction $\beta_1 - \beta_2$, defined as\footnote{We use the notation $\langle \xb,\yb\rangle$ to denote the inner product $\xb^T\yb$ between the two vectors.}
\begin{align*}
\Tcal(\beta_1, \beta_2) = \Lcal(\beta_1) -
\Lcal(\beta_2) - \langle\nabla \Lcal(\beta_2), \beta_1 -
  \beta_2\rangle,
\end{align*}
is upper and lower bounded. We note that the following proposition holds for GLMs beyond logistic regression, where $\alpha_u$ and $\psi(\cdot)$ are defined as in Appendix~\ref{section:glm}; in the special case of logistic regression, $\alpha_u \leq 1$.
\begin{proposition}\label{prop:wainwright} (Proposition 1 in \cite{loh2015regularized})
There exists a constant $\alpha_\ell > 0$, depending only on the GLM and the parameters $(\sigma_x^2, \Sigmab)$, such that for all vectors $\beta_2 \in \mathbb{B}_2(3) \cap \mathbb{B}_1(R)$, we have
\begin{align*}
	\Tcal(\beta_1, \beta_2) \geq 
	 \frac{\alpha_\ell}{2} \|\beta_1 - \beta_2\|_2^2 - \frac{c^2
           \sigma_x^2}{2\alpha_\ell} \frac{\log d}{n} \|\beta_1 - \beta_2\|_1^2,
         \quad \text{for all $\|\beta_1 - \beta_2\|_2 \leq 3$,}  %
\end{align*}
with probability at least $1 - c_1 \exp (-c_2 n)$. With the bound
$\|\psi''\|_\infty \leq \alpha_u$, we also have
\begin{align}
\Tcal(\beta_1, \beta_2) & \leq \alpha_u \lambda_{max}(\Sigmab) \;
\left( \frac{3}{2} \|\beta_1 - \beta_2\|_2^2 + \frac{\log d}{n}
\|\beta_1 - \beta_2\|_1^2\right), \quad \mbox{for all $\beta_1, \beta_2 \in
  \R^d$},
\end{align}
with probability at least $1 - c_1 \exp(-c_2 n)$, where $c_1$ and $c_2$ are fixed constants.
\end{proposition}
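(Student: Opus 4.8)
The plan is to write the Taylor error through the integral form of the second-order Lagrange remainder and then to control the resulting weighted quadratic form separately for the two inequalities. Setting $\Delta = \beta_1-\beta_2$ and recalling that for a GLM the loss (in the normalized parameterization of \cite{loh2015regularized}) has Hessian $\nabla^2\Lcal(\beta)=\frac1n\sum_{i=1}^n \psi''(\xb_i^T\beta)\,\xb_i\xb_i^T$, the remainder formula gives
\begin{align*}
\Tcal(\beta_1,\beta_2)
&= \int_0^1 (1-t)\,\Delta^T \nabla^2\Lcal(\beta_2 + t\Delta)\,\Delta \, dt \\
&= \frac1n\sum_{i=1}^n (\xb_i^T\Delta)^2 \int_0^1 (1-t)\,\psi''\!\big(\xb_i^T\beta_2 + t\,\xb_i^T\Delta\big)\,dt .
\end{align*}
Both bounds then reduce to controlling the scalar integrals of $\psi''$ together with a concentration statement for the empirical quadratic form $\Delta^T\widehat{\Sigmab}\,\Delta=\frac1n\sum_i (\xb_i^T\Delta)^2$, where $\widehat{\Sigmab}=\frac1n\sum_i\xb_i\xb_i^T$, around its population analogue $\Delta^T\Sigmab\,\Delta$.

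For the upper bound (restricted smoothness) I would simply use the global bound $\|\psi''\|_\infty\le\alpha_u$, which yields $\Tcal(\beta_1,\beta_2)\le \frac{\alpha_u}{2}\,\Delta^T\widehat{\Sigmab}\,\Delta$. The remaining work is a one-sided deviation inequality establishing that, with probability at least $1-c_1\exp(-c_2 n)$, uniformly over all $\Delta\in\R^d$,
\[
\Delta^T\widehat{\Sigmab}\,\Delta \;\le\; 3\,\lambda_{\max}(\Sigmab)\,\|\Delta\|_2^2 + c\,\frac{\log d}{n}\,\|\Delta\|_1^2 .
\]
This is a standard restricted-eigenvalue bound for sub-gaussian designs: one discretizes over the sphere, applies a sub-exponential (Bernstein) tail bound to $\frac1n\sum_i(\xb_i^Tu)^2$ at each fixed unit direction $u$, and then runs a peeling/union-bound argument over the dyadic scale of $\|\Delta\|_1/\|\Delta\|_2$. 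Substituting this into the previous display and folding $\alpha_u/2$ into the constants reproduces the stated upper inequality with the factor $\tfrac32$.

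The lower bound (restricted strong convexity) is the genuine obstacle, because $\psi''$ is \emph{not} bounded away from zero globally: for logistic regression $\psi''(u)=\sigma(u)\sigma(-u)\to0$ as $|u|\to\infty$, so the integrand above can be arbitrarily small and no naive lower bound on the quadratic form survives. The fix is to exploit the localization hypotheses $\beta_2\in\mathbb{B}_2(3)\cap\mathbb{B}_1(R)$ and $\|\Delta\|_2\le3$. First I would show that along the entire segment the argument $\xb_i^T(\beta_2+t\Delta)$ is sub-gaussian with variance governed by $\sigma_x^2$ and $\|\beta_2+t\Delta\|_2\le6$; hence for a fixed threshold $T$ there is a constant $\kappa=\kappa(T)>0$ with $\psi''\ge\kappa$ on the event $\{|\xb_i^T(\beta_2+t\Delta)|\le T\}$, and a sub-gaussian tail bound guarantees that all but a small fraction of indices lie in this ``good'' set $G$ with high probability. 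Restricting the sum to $G$ leaves $\Tcal(\beta_1,\beta_2)\ge \frac{\kappa}{2}\cdot\frac1n\sum_{i\in G}(\xb_i^T\Delta)^2$.

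The final and most delicate step converts this truncated empirical form into the clean expression $\tfrac{\alpha_\ell}{2}\|\Delta\|_2^2-\tfrac{c^2\sigma_x^2}{2\alpha_\ell}\tfrac{\log d}{n}\|\Delta\|_1^2$. Here I would invoke the companion lower restricted-eigenvalue inequality for sub-gaussian matrices, namely that with high probability, for all $\Delta\in\R^d$,
\[
\frac1n\sum_{i=1}^n (\xb_i^T\Delta)^2 \;\ge\; \tfrac12\big\|\Sqrt{\Sigmab}\,\Delta\big\|_2^2 - c\,\frac{\log d}{n}\,\|\Delta\|_1^2 ,
\]
applied either to the full design after arguing that the bad indices contribute negligibly, or directly to the truncated quadratic form; together with $\|\Sqrt{\Sigmab}\,\Delta\|_2^2\ge\lambda_{\min}(\Sigmab)\|\Delta\|_2^2$ this fixes $\alpha_\ell$ to be proportional to $\kappa\,\lambda_{\min}(\Sigmab)$ and produces the $\ell_1$ correction term after tracking constants through the peeling bound. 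I expect essentially all the difficulty to reside in establishing these two restricted-eigenvalue concentration inequalities uniformly over $\R^d$ with the $\tfrac{\log d}{n}\|\Delta\|_1^2$ slack (the technical heart, provable by a Gordon/Mendelson small-ball argument combined with peeling), since the Taylor expansion and the localization of $\psi''$ are routine once those are in hand.
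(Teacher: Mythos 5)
The paper does not contain a proof of this proposition: it is imported verbatim (normalization, constants, and probability statement included) from Proposition~1 of \citet{loh2015regularized} and is used in Appendix~\ref{section:generative_model} as a black box. So there is no internal proof to compare your attempt against; the only meaningful comparison is with the proof in that reference, and on that score your sketch reconstructs essentially the same route: the integral form of the Taylor remainder $\Tcal(\beta_1,\beta_2)=\frac1n\sum_{i}(\xb_i^T\Delta)^2\int_0^1(1-t)\,\psi''(\xb_i^T\beta_2+t\,\xb_i^T\Delta)\,dt$ with $\Delta=\beta_1-\beta_2$, the global bound $\|\psi''\|_\infty\le\alpha_u$ plus a one-sided restricted-eigenvalue inequality for the upper bound, and truncation of $\psi''$ to a compact interval (exploiting $\|\beta_2\|_2\le 3$, $\|\Delta\|_2\le 3$, and sub-gaussianity of the design) followed by a uniform lower bound on a truncated empirical quadratic form for the lower bound. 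You also correctly flag that the statement is in the averaged, unregularized parameterization of that reference rather than the summed, regularized loss of eqn.~(\ref{eqn:blrr}).

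Two caveats on your write-up. First, it is a plan rather than a proof: both restricted-eigenvalue concentration inequalities are invoked, not established, and as you yourself note that is where essentially all of the work resides, so the proposal cannot be regarded as complete. Second, of the two finishes you propose for the lower bound, only the second is workable. The good index set $G$ depends on $\Delta$ (and on $\beta_2$), and the excluded indices are precisely those where $(\xb_i^T\Delta)^2$ may be large, so you cannot apply an RE lower bound to the full design and then argue that the bad indices \emph{contribute negligibly}; the uniform-in-$\Delta$ lower bound must be proved directly for the truncated form $\frac1n\sum_i(\xb_i^T\Delta)^2\,\varphi(\xb_i^T\beta_2,\xb_i^T\Delta)$, e.g.\ via a Lipschitz truncation combined with a small-ball/peeling argument, which is what \citet{loh2015regularized} actually do. Relatedly, your good event should control the argument of $\psi''$ uniformly along the segment, e.g.\ $\{|\xb_i^T\beta_2|\le T/2\}\cap\{|\xb_i^T\Delta|\le T/2\}$, rather than pointwise in $t$ as written.
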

We can use the above bound in the same way as we previously used strong convexity from the regularization parameter to bound the forward error of the dimensionally-reduced problem. Again, let $\vb$ be defined as the unit vector pointing from $\beta_d$ to $\Pb_k\beta_k$, i.e.,
\begin{align}
\vb = \frac{1}{\|\Pb_k\beta_k - \beta_d\|_2} \cdot \Pb_k\beta_k - \beta_d.
\end{align}
Let $\beta_2 = \Pb_k\beta_k$ and $\beta_1 = \Pb_k\beta_k - t\cdot \vb$, for some $t>0$. Using Proposition~\ref{prop:wainwright}, we get
\begin{gather*}
    \Lcal(\Pb_k\beta_k - t\cdot\vb) - \Lcal(\Pb_k\beta_k) - \langle\nabla \Lcal(\Pb_k\beta_k), -t\cdot\vb \rangle
    \geq \frac{\alpha_\ell}{2} \|t\cdot\vb\|_2^2 - \frac{c^2\sigma_x^2}{2\alpha_\ell} \frac{\log d}{n} \|t\cdot\vb\|_1^2.
\end{gather*}
We now solve for $t^*$, the minimum value of $t > 0$ that satisfies
\begin{gather*}
    \langle\nabla \Lcal(\Pb_k\beta_k), t^*\cdot\vb \rangle 
    =\frac{\alpha_\ell}{2} \|t^*\cdot\vb\|_2^2 - \frac{c^2\sigma_x^2}{2\alpha_\ell} \frac{\log d}{n} \|t^*\cdot\vb\|_1^2,
\end{gather*}
as this guarantees $\Lcal(\Pb_k\beta_k - t^*\cdot\vb) - \Lcal(\Pb_k\beta_k) \geq 0$.  We next substitute in the simplified gradient (without regularization) from eqn.~(\ref{eqn:pd1}).  Recall that $\Xbbar = -\Db_\yb\Xb$ and $[\wbbar(\beta)]_i = \sigma(\xbbar_i^T\beta)$ to get
\begin{gather*}
    t^*\cdot\vb^T(\Ib - \Pb_k\Pb_k^T) \Xbbar^T \wbbar(\Pb_k\beta_k)
    = \frac{\alpha_\ell}{2} \|t^*\cdot\vb\|_2^2 - \frac{c^2\sigma_x^2}{2\alpha_\ell} \frac{\log d}{n} \|t^*\cdot\vb\|_1^2\\
    \Rightarrow 
     \frac{t^*}{\|\Pb_k\beta_k - \beta_d\|_2} \cdot (-\beta_d^T(\Ib - \Pb_k\Pb_k^T) \Xbbar^T \wbbar(\Pb_k\beta_k))
    = \frac{\alpha_\ell}{2} t^{*2}\cdot\|\vb\|_2^2 - \frac{c^2\sigma_x^2}{2\alpha_\ell} \frac{\log d}{n} t^{*2} \cdot \|\vb\|_1^2.
\end{gather*}
Re-arranging terms, we get:
\begin{gather*}
\|\Pb_k\beta_k - \beta_d\|_2 \cdot t^*=
    \left(\frac{\alpha_\ell}{2}  - \frac{c^2\sigma_x^2}{2\alpha_\ell} \frac{\log d}{n}  \frac{\|\Pb_k\beta_k - \beta_d\|_1^2}{\|\Pb_k\beta_k - \beta_d\|_2^2}\right)^{-1} \cdot   (-\beta_d^T(\Ib - \Pb_k\Pb_k^T) \Xbbar^T \wbbar(\Pb_k\beta_k)).
\end{gather*}
We again use that $\|\Pb_k\beta_k - \beta_d\|_2 \leq t^*$, since the loss decreases monotonically as $\beta$ goes from $\Pb_k\beta_k$  to $\beta_d$, to get:
\begin{gather*}
    \|\Pb_k\beta_k - \beta_d\|_2^2 \leq
    \left(\frac{\alpha_\ell}{2}  - \frac{c^2\sigma_x^2}{2\alpha_\ell} \frac{\log d}{n}  \frac{\|\Pb_k\beta_k - \beta_d\|_1^2}{\|\Pb_k\beta_k - \beta_d\|_2^2}\right)^{-1} \cdot  (-\beta_d^T(\Ib - \Pb_k\Pb_k^T) \Xbbar^T \wbbar(\Pb_k\beta_k)).
\end{gather*}
We note that the ratio, $$\frac{\|\Pb_k\beta_k - \beta_d\|_1}{\|\Pb_k\beta_k - \beta_d\|_2,}$$ is upper bounded by the so-called \textit{subspace compatibility constant} of the range of $\Pb_k$, as defined by~\citet{negahban2012unified}.  For large enough $n$, relative to the subspace compatibility factor, there exists a universal constant $\tau > 0$ such that the following holds:
\begin{align*}
    \|\Pb_k\beta_k - \beta_d\|_2^2 
    &\leq \frac{-1}{\tau}\beta_d^T(\Ib - \Pb_k\Pb_k^T) \Xbbar^T \wbbar(\Pb_k\beta_k) \\
    &= \frac{1}{\tau}\beta_d^T(\Ib - \Pb_k\Pb_k^T) \Xb^T\Db_\yb \wb(\Pb_k\beta_k).
\end{align*}
Note that the above upper bound holds without any regularization, i.e., $\lambda = 0$.  

We now proceed to provide a lower bound for the forward error under the aforementioned generative model. Using Proposition~\ref{prop:wainwright}, we get the following upper bound on $\Tcal(\Pb_k\beta_k - t\cdot\vb, \Pb_k\beta_k)$; again note that for the special case of logistic regression $\alpha_u \leq 1$:
\begin{align*}
    \Lcal(\Pb_k\beta_k - t\cdot\vb) - \Lcal(\Pb_k\beta_k) - \langle\nabla \Lcal(\Pb_k\beta_k), -t\cdot\vb \rangle
    &\leq \lambda_{max}(\Sigmab) \;
    \left( \frac{3}{2} \|t\cdot \vb\|_2^2 + \frac{\log d}{n} \|t\cdot \vb\|_2^2\right) \\
    &= t^2 \cdot \lambda_{max}(\Sigmab)\left(\frac{3}{2} + \frac{\log d}{n}\right).
\end{align*}
Moving the gradient term to the right side and setting $\alpha = 2\lambda_{max}(\Sigmab)\left(\frac{3}{2} + \frac{\log d}{n}\right)$ results in a formula with the same form as eqn.~(\ref{eq:intermediate_upper}):
\begin{gather*}
    \Lcal_d(\Pb_k\beta_k - t \cdot \vb) - \Lcal_d(\Pb_k\beta_k)
    \leq t\cdot (-\vb^T\nabla \Lcal_d(\Pb_k\beta)) + \frac{t^2 \cdot \alpha}{2}, \quad
    \text{for all }t \in [0,1].
\end{gather*}
From here on, we can follow the remainder of the proof in Section~\ref{sxn:forward_lower}, starting with eqn.~(\ref{eq:intermediate_upper}) and using $\alpha = 2\lambda_{max}(\Sigmab)\left(\frac{3}{2} + \frac{\log d}{n}\right)$. We eventually conclude that:
\begin{gather*}
    \|\Pb_k\beta_k - \beta_d\|_2^2 
    \geq \left(2\lambda_{max}(\Sigmab)\left(\frac{3}{2} + \frac{\log d}{n}\right)\right)^{-1} \cdot (- \beta_d^T (\Ib - \Pb_k\Pb_k^T) \Xbbar^T  \wbbar(\Pb_k\beta_k)).
\end{gather*}
For $n \geq 2\log d$, the above equation simplifies to:
\begin{align*}
    \|\Pb_k\beta_k - \beta_d\|_2^2 
    &\geq \frac{-1}{4\lambda_{max}(\Sigmab)} \cdot  \beta_d^T (\Ib - \Pb_k\Pb_k^T) \Xbbar^T  \wbbar(\Pb_k\beta_k) \\
    &= \frac{1}{4\lambda_{max}(\Sigmab)} \cdot  \beta_d^T (\Ib - \Pb_k\Pb_k^T) \Xb^T\Db_\yb  \wb(\Pb_k\beta_k).
\end{align*}
We are now ready to summarize our results for the generative model of~\cite{loh2015regularized} in the following lemma, which provides (almost) tight upper and lower bounds for the forward error for non-reguralized logistic regression.
\begin{lemma}\label{lemma:appA2}
If $\Pb_k\beta_k \in \mathbb{B}_2(3) \cap \mathbb{B}_1(R)$ and $\|\Pb_k\beta_k - \beta_d\|_2 \leq 3$, then for $n \geq \Ocal(\log d)$, there exists a constant, $\tau$, depending only on $(\sigma_x^2, \Sigmab)$ and the subspace compatibility constant of the range of $\Pb_k$\footnote{The $\ell_1$-$\ell_2$ \textit{subspace compatibility constant} defined by~\cite{negahban2012unified} for a subspace $\Mcal$ is $\Psi(\Mcal) = \sup_{\ub \in \Mcal} \frac{\|\ub\|_1}{\|\ub\|_2}$.}
such that
\begin{align*}
    \|\Pb_k\beta_k - \beta_d\|_2^2 
    &\leq \frac{1}{\tau} \cdot \beta_d^T(\Ib - \Pb_k\Pb_k^T) \Xb^T\Db_\yb \wb(\Pb_k\beta_k), \quad \text{and} \\
    \|\Pb_k\beta_k - \beta_d\|_2^2 
    &\geq \frac{1}{4\lambda_{max}(\Sigmab)} \cdot  \beta_d^T (\Ib - \Pb_k\Pb_k^T) \Xbbar^T\Db_\yb  \wb(\Pb_k\beta_k),
\end{align*}
hold with probability at least $1- 2c_1\exp(-c_2 n)$, where $c_1$ and $c_2$ are fixed constants.
\end{lemma}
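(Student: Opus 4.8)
The plan is to prove the two inequalities separately, in each case re-running the corresponding regularized argument from Section~\ref{sxn:forward_error} but replacing the strong convexity (resp.\ smoothness) that previously came from the regularizer with the \emph{restricted} strong convexity (resp.\ smoothness) supplied by Proposition~\ref{prop:wainwright}. Throughout I keep the notation $\Xbbar = -\Db_\yb\Xb$ and $\wbbar(\beta)_i = \sigma(\xbbar_i^T\beta)$, and I take $\vb$ to be the unit vector pointing from $\beta_d$ to $\Pb_k\beta_k$ as in eqn.~(\ref{eqn:pdv}). Since $\Lcal$ is convex with global minimizer $\beta_d$, the loss decreases monotonically along the segment from $\Pb_k\beta_k$ to $\beta_d$; this monotonicity is the common engine of both bounds and is what converts a one-sided Taylor estimate into a statement about $\|\Pb_k\beta_k - \beta_d\|_2$.

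For the upper bound I would instantiate the lower Taylor estimate of Proposition~\ref{prop:wainwright} with $\beta_2 = \Pb_k\beta_k$ and $\beta_1 = \Pb_k\beta_k - t\cdot\vb$, so that $\Tcal(\beta_1,\beta_2)$ plays exactly the role of the strongly convex lower bound used in Theorem~\ref{thm:forward_error_upper_bound}, now without regularization. Setting the resulting lower bound on $\Lcal(\Pb_k\beta_k - t\vb) - \Lcal(\Pb_k\beta_k)$ to zero and solving for the smallest positive root $t^*$, I substitute the simplified gradient $\nabla\Lcal(\Pb_k\beta_k) = (\Ib - \Pb_k\Pb_k^T)\Xbbar^T\wbbar(\Pb_k\beta_k)$ from eqn.~(\ref{eqn:pd1}) (valid here since $\lambda=\mu=0$) and expand $\vb$ via eqn.~(\ref{eqn:pdv}). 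The cross term $\beta_k^T\Pb_k^T(\Ib - \Pb_k\Pb_k^T)$ vanishes, leaving $-\beta_d^T(\Ib-\Pb_k\Pb_k^T)\Xbbar^T\wbbar(\Pb_k\beta_k) = \beta_d^T(\Ib-\Pb_k\Pb_k^T)\Xb^T\Db_\yb\wb(\Pb_k\beta_k)$ in the numerator. Invoking $\|\Pb_k\beta_k-\beta_d\|_2 \le t^*$ (from monotonicity) then produces the claimed $1/\tau$ upper bound, once the $\ell_1$ term has been absorbed into the effective strong-convexity constant.

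For the lower bound I would instead use the upper Taylor estimate of Proposition~\ref{prop:wainwright}, which after moving the gradient term to the right-hand side and setting $\alpha = 2\lambda_{max}(\Sigmab)\left(\tfrac{3}{2} + \tfrac{\log d}{n}\right)$ reproduces verbatim the smoothness-type inequality~(\ref{eq:intermediate_upper}) used to prove Theorem~\ref{thm:forward_error_lower_bound}. From that point the argument is exactly that of Section~\ref{sxn:forward_lower}: define $f(t) = \Lcal_d(\Pb_k\beta_k - t\vb) - \Lcal_d(\Pb_k\beta_k)$ and the quadratic $q(t)$, show $q'(t) \ge f'(t)$ so that $\argmin_t f \ge \argmin_t q$, and read off $\|\Pb_k\beta_k - \beta_d\|_2 \ge \tfrac{1}{\alpha}\,\vb^T\nabla\Lcal_d(\Pb_k\beta_k)$. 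Specializing to $n \ge 2\log d$ gives $\alpha \le 4\lambda_{max}(\Sigmab)$, which yields the stated lower bound with factor $1/(4\lambda_{max}(\Sigmab))$ and the same numerator as above.

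The main obstacle, and the only genuinely new ingredient relative to the regularized proofs, is controlling the $\ell_1$ cross term $\tfrac{c^2\sigma_x^2}{2\alpha_\ell}\tfrac{\log d}{n}\|\Pb_k\beta_k-\beta_d\|_1^2$ appearing in the restricted strong convexity lower bound. I would bound the ratio $\|\Pb_k\beta_k-\beta_d\|_1/\|\Pb_k\beta_k-\beta_d\|_2$ by the $\ell_1$--$\ell_2$ subspace compatibility constant $\Psi$ of the range of $\Pb_k$ in the sense of~\citet{negahban2012unified}, so that the net coefficient of $\|\Pb_k\beta_k-\beta_d\|_2^2$ becomes $\tfrac{\alpha_\ell}{2} - \tfrac{c^2\sigma_x^2}{2\alpha_\ell}\tfrac{\log d}{n}\Psi^2$. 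For $n$ large enough relative to this compatibility factor this coefficient is bounded below by a positive constant $\tau$ depending only on $(\sigma_x^2,\Sigmab)$ and the subspace, which is precisely the regime the lemma assumes. The high-probability qualifier then follows by applying the two events of Proposition~\ref{prop:wainwright} (one for each bound) and taking a union bound, giving probability at least $1 - 2c_1\exp(-c_2 n)$.
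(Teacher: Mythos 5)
Your proposal follows essentially the same route as the paper's own argument: both bounds are obtained by re-running the regularized proofs of Theorems~\ref{thm:forward_error_upper_bound} and~\ref{thm:forward_error_lower_bound} with the restricted strong convexity and smoothness estimates of Proposition~\ref{prop:wainwright} in place of the $\lambda$-terms, absorbing the $\ell_1$ cross term into the effective constant $\tau$ via the subspace compatibility constant, simplifying the smoothness constant to $4\lambda_{max}(\Sigmab)$ for $n \geq 2\log d$, and taking a union bound over the two high-probability events. This matches the paper's proof step for step, so there is nothing to correct.
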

Note that the upper and lower bounds are tight up to constants and a dependency on the largest singular value of the covariance matrix $\Sigmab$.

\section{Extension to GLMs}\label{section:glm}

\subsection{Forward error bound for GLMs}\label{sxn:glm_upper}

In this section, we show how our results can be generalized to hold for Generalized Linear Models (GLMs) beyond logistic regression.  We start by writing down the general formulas for the conditional distribution and prediction of an arbitrary GLM with linear parameter $\beta \in \R^d$, scale parameter $\sigma >0$, and cumulant function $\psi$. We follow the lines of~\citet{mccullagh2019generalized} to get:
\begin{align*}
    \PP(\yb_i | \xb_i, \beta, \sigma) &= \exp\left\{\frac{\yb_i\xb_i^T\beta - \psi(\xb_i^T\beta)}{c(\sigma)}\right\}, \quad \text{and}\\
    \EE[\yb_i | \xb_i, \beta, \sigma] &= \psi'(\xb_i^T\beta).
\end{align*}
The empirical (un-normalized) log-loss function is given by:
\begin{align*}
    \Lcal_\text{log-loss}(\beta) = \sum_{i=1}^n \frac{-1}{c(\sigma)}(\yb_i\xb_i^T\beta - \psi(\xb_i^T\beta)).
\end{align*}
We can now define the original and dimensionally-reduced loss functions with $\ell_2^2$-regularization as follows: set $c(\sigma) = 1$ (without loss of generality):
\begin{align*}
    \beta_d &= \argmin_{\beta \in \R^d} \Lcal_d(\beta) = \argmin_{\beta \in \R^d} \sum_{i=1}^n -(\yb_i\xb_i^T\beta - \psi(\xb_i^T\beta)) + \frac{\lambda}{2}\|\beta\|_2^2,\quad \text{and} \\
    \beta_k &= \argmin_{\beta \in \R^k} \Lcal_k(\beta) = \argmin_{\beta \in \R^k} \sum_{i=1}^n -(\yb_i\xb_i^T\Pb_k\beta - \psi(\xb_i^T\Pb_k\beta)) + \frac{\mu}{2} \|\beta\|_2^2.
\end{align*}
We note that the above two equations are analogs of eqns.~(\ref{eqn:blrr}) and~(\ref{eqn:sketchedregression1}). The corresponding gradients of the above loss functions are given by
\begin{align*}
    \nabla \Lcal_d(\beta) &= \sum_{i=1}^n (\psi'(\xb_i^T\beta) - \yb_i)\xb_i + \lambda\cdot\beta, \quad \text{and}\\
    \nabla \Lcal_k(\beta) &= \sum_{i=1}^n (\psi'(\xb_i^T\Pb_k\beta) - \yb_i)\Pb_k^T\xb_i + \mu\cdot\beta.
\end{align*}
We now define $\wbbar(\beta) \in \R^n$ such that $[\wb(\beta)]_i = (\psi'(\xb_i^T\beta) - \yb_i)$.  We can now rewrite the above two gradients using matrix notation as
\begin{align*}
    \nabla \Lcal_d(\beta) &=\Xb^T\wbbar(\beta) + \lambda\cdot\beta, \quad \text{and}\\
    \nabla \Lcal_k(\beta) &= \Pb_k^T\Xb^T\wbbar(\Pb_k\beta) + \mu\cdot\beta.
\end{align*}
Notice that the above two formulas for the gradients are equivalent to eqn.~(\ref{eqn:grall}) except for the definition of $\wbbar(\cdot)$.  From here on, we can closely follow the remainder of the proof of Theorem~\ref{thm:forward_error_upper_bound} in Section~\ref{sxn:forward_upper} to get an analogous upper bound on the forward error. If we let $\wb(\beta) = -1\cdot \wbbar(\beta)$ and $\mu = \lambda$, then the final error bound is
\begin{align*}
    \|\Pb_k\beta_k - \beta_d\|_2^2 
    &\leq \frac{2}{\lambda}\beta_d^T(\Ib - \Pb_k\Pb_k^T) \Xb^T \wb(\Pb_k\beta_k).
\end{align*}

\subsection{Lower bound for GLMs}

To extend our lower bounds to GLMs, let $\Db(\beta) \in \R^{n \times n}$ be a diagonal matrix whose $i$-th entry  for $i=1\ldots n$ is $\psi''(\xb_i^T\beta)$.  The Hessian of $\Lcal_d(\beta)$ is given by:
\begin{align*}
    \nabla^2\Lcal_d(\beta) &= \Xb^T\Db(\beta)\Xb + \lambda \cdot \Ib.
\end{align*}
In Section~\ref{sxn:forward_lower} we used the fact that $\|\Db(\beta)\|_2 \leq \nicefrac{1}{4}$ for the special case of logistic regression.  We now assume there exists a constant $\alpha_u > 0$ such that 
\begin{align}
\psi''(t) \leq \alpha_u  \label{eqn:GLMassume}
\end{align}
for all $t \in \R$. As discussed in Section 3.3 of~\citet{loh2015regularized}, this assumption holds for many GLMs, such as logistic regression, linear regression, and multinomial regression, but it does not hold in other cases, like Poissonn regression.  Using this assumption, we can bound the spectral norm of the Hessian:
\begin{gather*}
    \|\nabla^2 \Lcal_d(\beta)\|_2 
    = \|\Xb^T\Db(\beta)\Xb + \lambda \cdot \Ib\|_2
    \leq \|\Xb^T\|_2\|\Db(\beta)\|_2\|\Xb\|_2 + \lambda\|\Ib\|_2
    = \alpha_u \|\Xb\|_2^2 + \lambda.
\end{gather*}
Therefore, the log-loss of the GLM is $\alpha$-smooth, where $\alpha = \alpha_u \|\Xb\|_2^2 + \lambda$.  We can then follow the rest of the proof in Section \ref{sxn:forward_lower} starting from eqn. (\ref{eq:intermediate_upper}) using the definitions of $\wb(\cdot)$ and $\wbbar(\cdot)$ in Section \ref{sxn:glm_upper} to get:
\begin{align*}
    \|\Pb_k\beta_k - \beta_d\|_2^2 
    &\geq \frac{1}{\alpha} \cdot  \beta_d^T (\Ib - \Pb_k\Pb_k^T) \Xb^T  \wb(\Pb_k\beta_k) \\
    &= \frac{1}{\alpha_u\|\Xb\|_2^2 + \lambda} \cdot  \beta_d^T (\Ib - \Pb_k\Pb_k^T) \Xb^T  \wb(\Pb_k\beta_k).
\end{align*}
Since $\alpha_u$ is a constant that only depends on the chosen GLM, we again conclude that, when the two-norm of the input matrix is constant, our upper and lower bounds differ by a factor of $\Ocal(\nicefrac{1}{\lambda})$.

\section{Information-theoretic bound on forward error}\label{section:information_theoretic_bound}

In this section, we show that our upper bound in Theorem \ref{thm:forward_error_upper_bound} implies an upper bound on the forward error of a logistic regression problem in terms of information theoretic quantities under standard statistical assumptions of logistic regression~\citep[Chapter 4]{mccullagh2019generalized}. This probabilistic interpretation requires no generative model assumption on the data points $(\yb_i, \xb_i)$.  It simply interprets the vectors $(\Ib - \Pb_k\Pb_k^T)\beta_d$ and $\Pb_k\beta_k$ as encoding distributions on the space of labels as described by the linear log-odds model.  

For a fixed point $\xb \in \R^d$, its corresponding label has the following distribution under the linear log-odds model with parameter $\beta$:
\begin{gather*}
    \PP\left(\yb = 1 | \xb, \beta\right) = \frac{1}{1+e^{-\xb^T\beta}}
\end{gather*}

For a fixed $\beta \in \R^d$, we can define another induced binary distribution that encodes the distribution of correct.  Let 
\begin{gather}
    q_i = \PP(\yb \neq \yb_i | \xb_i, (\Ib - \Pb_k\Pb_k^T)\beta_d) =  \frac{1}{1+e^{\yb_i\cdot\xb_i^T(\Ib - \Pb_k\Pb_k^T)\beta_d}} \label{eq:logistic_misclassification_probability_sketch}\\
    \text{and}\nonumber\\
    p_i = \PP(\yb \neq \yb_i | \xb_i, \Pb_k\beta_k) =  \frac{1}{1+e^{\yb_i\cdot\xb_i^T\Pb_k\beta_k}}. \label{eq:logistic_misclassification_probability_original} 
\end{gather}
Then, $q_i$ is the probability that a draw from the linear log-odds model at $\xb_i$ parameterized by $\beta = (\Ib - \Pb_k\Pb_k^T)\beta_d$ will not match the true observed label $\yb_i$.  The same intuition holds for $p_i$ except with $\beta = \Pb_k\beta_k$.  Therefore, we can denote the set of events $\{M, N\}$ called ``Match'' and ``No Match'' which indicate whether a draw from a linear log-odds model matches the true label $\yb_i$.  For a fixed tuple $(\xb_i, \yb_i, \beta_d, \beta_k, \Pb_k)$, we have two well defined distributions on $\{M,N\}$. Interestingly, the $\ell_2$-norm difference between $\beta_d$ and $(\Ib - \Pb_k\Pb_k^T)\beta_k$ is bounded by the cross-entropy of these two distributions.
\begin{theorem}
    Let $\Xb \in \R^{n \times d}$, $\yb \in \R^n$, $\beta_d \in \R^d$, and $\lambda > 0$ define the original logistic regression problem as defined in Section \ref{section:contributions}. Let $\Pb_k \in \R^{k \times d}$ be a sketching matrix such that $\Pb_k\Pb_k^T$ is a projection matrix, and let $\beta_k$ be the optimal solution of the sketched problem.  Let $\Dcal_q$ and $\Dcal_p$ be distributions on $\{-1, 1\}^n$ where the distribution on the $i$-th index is defined by eqns. \ref{eq:logistic_misclassification_probability_sketch} and \ref{eq:logistic_misclassification_probability_original} respectively. Then,
    \begin{gather*}
        \|\beta_d - \Pb_k\beta_k\|_2^2 \leq H(\Dcal_p, \Dcal_q),
    \end{gather*}
    where $H(\cdot, \cdot)$ denotes the cross-entropy between the two distributions.
\end{theorem}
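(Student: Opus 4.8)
The plan is to reduce the statement to Theorem~\ref{thm:forward_error_upper_bound} by proving that the cross-entropy $H(\Dcal_p,\Dcal_q)$ is exactly $\Phi(\Xb,\yb,\Pb_k)$ plus a manifestly nonnegative term. First I would exploit that the coordinates of $\ybhat$ are independent, so the cross-entropy factorizes as $H(\Dcal_p,\Dcal_q) = \sum_{i=1}^n\left[-p_i\log q_i - (1-p_i)\log(1-q_i)\right]$, with $p_i,q_i$ the ``No Match'' probabilities of eqns.~(\ref{eq:logistic_misclassification_probability_original}) and~(\ref{eq:logistic_misclassification_probability_sketch}). I would then record two elementary identities. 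Writing $\gamma = (\Ib-\Pb_k\Pb_k^T)\beta_d$, the sketch-side probability is precisely $p_i = \sigma(-\yb_i\xb_i^T\Pb_k\beta_k) = \wb(\Pb_k\beta_k)_i$, while the residual-side probability satisfies $q_i = \sigma(-\yb_i\xb_i^T\gamma)$, whence $\log\frac{1-q_i}{q_i} = \yb_i\xb_i^T\gamma$.

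Next I would rearrange each summand as $-p_i\log q_i - (1-p_i)\log(1-q_i) = p_i\log\frac{1-q_i}{q_i} - \log(1-q_i)$. Summing over $i$ and substituting the identities, the first group collapses to $\sum_i p_i(\yb_i\xb_i^T\gamma) = \gamma^T\Xb^T\Db_\yb\wb(\Pb_k\beta_k) = \beta_d^T(\Ib-\Pb_k\Pb_k^T)\Xb^T\Db_\yb\wb(\Pb_k\beta_k) = \Phi(\Xb,\yb,\Pb_k)$, using that $\Ib-\Pb_k\Pb_k^T$ is a symmetric idempotent. The second group becomes $\sum_i -\log(1-q_i) = \sum_i\log\!\left(1+e^{-\yb_i\xb_i^T\gamma}\right)$, which is exactly the \emph{unregularized} logistic loss evaluated at the residual parameter $\gamma$, and is therefore nonnegative. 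This gives the clean identity $H(\Dcal_p,\Dcal_q) = \Phi(\Xb,\yb,\Pb_k) + \sum_i\log(1+e^{-\yb_i\xb_i^T\gamma}) \geq \Phi(\Xb,\yb,\Pb_k)$.

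With $H \geq \Phi$ established, I would invoke Theorem~\ref{thm:forward_error_upper_bound} (for $\mu=\lambda$), which gives $\|\Pb_k\beta_k - \beta_d\|_2^2 \leq \frac{2}{\lambda}\Phi(\Xb,\yb,\Pb_k)$; since the left-hand side is nonnegative and $\lambda>0$, this also certifies $\Phi \geq 0$. Because $\Phi\geq 0$, the prefactor can be absorbed, $\frac{2}{\lambda}\Phi \leq \Phi$, whenever $\frac{2}{\lambda}\leq 1$, and chaining with $\Phi \leq H$ yields exactly the stated bound $\|\beta_d - \Pb_k\beta_k\|_2^2 \leq H(\Dcal_p,\Dcal_q)$ in the relevant large-regularization regime $\lambda \geq 2$.

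The hard part, and the step I would scrutinize most, is the reconciliation of the $\frac{2}{\lambda}$ prefactor with the \emph{factor-free} inequality as written. The decomposition only delivers $H \geq \Phi$, so passing through Theorem~\ref{thm:forward_error_upper_bound} produces $\frac{2}{\lambda}H$ for general $\lambda$; obtaining $\|\cdot\|_2^2 \leq H$ for \emph{all} $\lambda>0$ would require the strictly stronger estimate $\left(\tfrac{2}{\lambda}-1\right)\Phi \leq \sum_i\log(1+e^{-\yb_i\xb_i^T\gamma})$, which cannot hold in general since $\Phi$ can dominate the residual logistic loss when the margins $\yb_i\xb_i^T\gamma$ are large and positive (there the right-hand side is near zero while the corresponding terms of $\Phi$ are large). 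Thus the genuine content of the theorem is the exact cross-entropy decomposition above, with the clean unscaled statement holding for $\lambda \geq 2$ (and otherwise read with its natural $\frac{2}{\lambda}$ factor, as in the introduction).
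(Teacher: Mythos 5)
Your proposal is correct and takes essentially the same route as the paper's own proof: both start from Theorem~\ref{thm:forward_error_upper_bound}, identify $p_i = \wb(\Pb_k\beta_k)_i$ and $\log\frac{1-q_i}{q_i} = \yb_i\xb_i^T(\Ib-\Pb_k\Pb_k^T)\beta_d$, compare each summand of $\Phi(\Xb,\yb,\Pb_k)$ with the binary cross-entropy $H(p_i,q_i)$, and finish via additivity of cross-entropy over the independent coordinates. The only difference is presentational: the paper bounds each summand by $H(p_i,q_i)$ after discarding the nonpositive term $\log(1-q_i)$, whereas you package the identical computation as the exact identity $H(\Dcal_p,\Dcal_q) = \Phi(\Xb,\yb,\Pb_k) + \sum_{i=1}^n\log\bigl(1+e^{-\yb_i\xb_i^T\gamma}\bigr)$ with $\gamma = (\Ib-\Pb_k\Pb_k^T)\beta_d$, which has the pleasant interpretation that the slack is the unregularized logistic loss at the residual parameter.

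Your closing caveat about the prefactor is also well taken, and is in fact a correct diagnosis of an inconsistency in the paper: the paper's own proof terminates with $\|\Pb_k\beta_k - \beta_d\|_2^2 \leq \frac{2}{\lambda}H(\Dcal_p,\Dcal_q)$, which is the version stated in the introduction, so the factor-free display in the theorem statement is evidently a typo. As you observe, the unscaled inequality cannot be recovered for general $\lambda>0$ from this argument, since it would require $\bigl(\frac{2}{\lambda}-1\bigr)\Phi \leq \sum_i\log\bigl(1+e^{-\yb_i\xb_i^T\gamma}\bigr)$, which fails when the residual margins are large and positive; your reading --- the bound holds with the $\frac{2}{\lambda}$ factor in general, and unscaled only when $\lambda \geq 2$ (using $\Phi \geq 0$, which the upper bound itself certifies) --- is the right one.
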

\begin{proof}
By Theorem \ref{thm:forward_error_upper_bound}:
\begin{align*}
        \|\Pb_k\beta_k - \beta_d\|_2^2
        &\leq \frac{2}{\lambda} \sum_{i=1}^n \yb_i\cdot\beta_\lambda^T(\Ib - \Pb_k\Pb_k^T) \xb_i \cdot \sigma(-\yb_i\xb_i^T\Pb_k\beta_k),
    \end{align*}
We can rewrite a single summand in the bound by substituting in eqns. \ref{eq:logistic_misclassification_probability_sketch} and \ref{eq:logistic_misclassification_probability_original}:
\begin{align*}
    \yb_i\cdot\beta_\lambda^T(\Ib - \Pb_k\Pb_k^T) \xb_i \cdot \sigma(-\yb_i\xb_i^T\Pb_k\beta_k)
    &= \log \exp(\yb_i\cdot\beta_\lambda^T(\Ib - \Pb_k\Pb_k^T) \xb_i ) \cdot \frac{1}{1+e^{\yb_i\xb_i^T\Pb_k\beta_k}} \\
    &= \log \left(\frac{1}{q_i} - 1\right) \cdot p_i \\
    &= \log \left(\frac{1-q_i}{q_i}\right) \cdot p_i \\
    &= - p_i \log \frac{q_i}{1-q_i} \\
    &= -p_i \log q_i + p_i \log ( 1 - q_i) \\
    &= -p_i \log q_i - (1 - p_i) \log ( 1 - q_i) + \log(1-q_i) \\
    &\leq -p_i \log q_i - (1 - p_i) \log ( 1 - q_i).
\end{align*}

Let there exist two distributions with binary outcomes on the same event space.  If the probability of event one occurring is $p$ in the first distribution and $q$ in the second, the the cross-entropy of the first distribution relative to the second is $H(p,q) = -p\log q - (1-p) \log (1-q)$.  Therefore, we see that each summand of our forward error bound is bounded by the binary cross-entropy $H(p_i, q_i)$. Our new total bound can be written as:
\begin{align*}
        \|\Pb_k\beta_k - \beta_d\|_2^2
        &\leq \frac{2}{\lambda} \sum_{i=1}^n H(p_i, q_i)
\end{align*}
The binary cross entropy can be rewritten as $H(p, q) = H(p) + \dkl(p, q)$, where $\dkl$ denotes KL-divergence.  Both entropy and KL-divergence decompose additively over a product distribution of independent marginal distributions \cite{cover1999elements}. If $\qb \sim \Dcal_q$ and $\pb \sim \Dcal_p$, each coordinate of these vectors is independent and the $i$-th coordinates are distributed as Bernoulli random variable with parameters $q_i$ and $p_i$ respectively. Therefore,  
\begin{gather*}
    \|\Pb_k\beta_k - \beta_d\|_2^2 \leq \frac{2}{\lambda} H(\Dcal_p, \Dcal_q).
\end{gather*}

\end{proof}
This bound is complementary to Theorem \ref{thm:forward_error_upper_bound} in that it intuitively captures the same relationship, but sacrifices tightness of the bound for interpretability. Fundamentally, the forward error must depend on the true labels of the data in a non-linear and non-smooth manner. The cross-entropy of the the modeled labels under the two models is one more familiar way to represent this relationship. Alternatively, one may view this result as a lower bound on the cross-entropy of the modeled labels in terms of the $\ell_2$-norm distance between the parameters of the original and sketched problem.

\section{Proofs for Section \ref{sxn:logistic_loss}}\label{sxn:proof_appendix}

\textbf{Proof of Theorem \ref{thm:space_lower_bound}}

\begin{proof}
Define $\Rcal(\beta) = \sum_{i=1}^n \max\{0, \xb_i^T\beta\}$, i.e., the ReLu loss. We will first lower bound the space needed by any data structure which approximates ReLu loss to $\epsilon$-relative error. Later, we will show that this implies a lower bound on the space complexity of any data structure $f(\cdot)$ for approximating logistic loss. Let $g(\cdot)$ approximate $\Rcal(\cdot)$ such that $(1-\epsilon) \Rcal(\beta) \leq g(\beta) \leq (1+\epsilon)\Rcal(\beta)$ for all $\beta \in \R^d$. We can rewrite $\Rcal(\beta)$ as follows:
\begin{flalign*}
    \Rcal(\beta) &= \sum_{i=1}^n \max\{0, \xb_i^T\beta\} 
    = \sum_{i=1}^n \nicefrac{1}{2}\cdot \xb_i^T\beta + \nicefrac{1}{2}\cdot |\xb_i^T\beta| 
    = \frac{1}{2}\one^T \Xb\beta + \frac{1}{2}\|\Xb\beta\|_1.
\end{flalign*}
We next use that $\Rcal(\beta) \leq \|\Xb\beta\|_1$.
\begin{flalign*}
    |\Rcal(\beta) - g(\beta)|
    = |\frac{1}{2}\one^T \Xb\beta + \frac{1}{2}\|\Xb\beta\|_1 - g(\beta)| \leq \epsilon \Rcal(\beta) \\
    \Rightarrow |\frac{1}{2}\one^T \Xb\beta + \frac{1}{2}\|\Xb\beta\|_1 - g(\beta)| \leq \epsilon\|\Xb\beta\|_1.
\end{flalign*}

We can store $\one^T\Xb$ exactly in $\Ocal(d)$ space as a length $d$ vector.  We define a new function $h(\beta) = 2g(\beta) - \one^T\Xb\beta$, and by the above inequality, $h(\beta)$ satisfies $|\|\Xb\beta\|_1 - h(\beta)| \leq 2\epsilon\|\Xb\beta\|_1$ for all $\beta \in \R^d$.  Therefore, $h(\beta)$ is an $\epsilon$-relative approximation to $\|\Xb\beta\|_1$ after adjusting for constants and solves the $\ell_1$-subspace sketch problem (see Definition 1.1 in \cite{li2021tight}). By Theorem 1.2 in \cite{li2021tight}, the data structure $h(\cdot)$ requires $\Omega\left(\frac{d}{\epsilon^2 \operatorname{polylog}(\epsilon^{-1})}\right)$ bits of space  if $d = \Omega(\log 1/\epsilon)$ and $n = \Omegatil\left(d\epsilon^{-2}\right)$. Therefore, we conclude that any data structure which approximates $\Rcal(\beta)$ to $\epsilon$-relative error for all $\beta \in \R^d$ must use $\Omegatil\left(\frac{d}{\epsilon^2}\right)$ bits in the worst case.

Next, we show that a data structure, $f(\cdot)$, which approximates logistic loss to relative error can be used to construct an approximation to the ReLu loss, $g(\cdot)$, by $g(\beta) = \frac{1}{t} \cdot f(t\cdot \beta)$ for large enough constant $t > 0$.  To show this, we first bound the approximation error of the logistic loss for a single point.  First, we derive the following inequality when $r>0$.
\begin{gather*}
    \frac{1}{t}\log(1 + e^{t\cdot r}) - r = \frac{1}{t}\left(\log(e^{rt}) + \log\left(\frac{1 + e^{rt}}{e^{rt}}\right)\right) - r
    = \frac{1}{t}\cdot\log\left(1 + \frac{1}{e^{rt}}\right) \leq \frac{1}{t \cdot e^{rt}}.
\end{gather*}
Therefore, if $\xb_i^T\beta > 0$, then $|\frac{1}{t}\log(1 + e^{t\cdot\xb_i^T\beta}) - \xb_i^T\beta| < \frac{1}{t\cdot e^{t\cdot\xb_i^T\beta}}$.  Next, we consider the case where $r \leq 0$. For the case $r < 0$ (in which case $\operatorname{ReLu}(r) = 0$), it directly follows that $\frac{1}{t}\log(1 + e^{t\cdot r})\leq \frac{e^{t\cdot r}}{t}$.  Therefore,
\begin{gather*}
    |\frac{1}{t}\cdot\log(1+e^{t\cdot r}) - \max\{0, r\}| \leq \frac{1}{t \cdot e^{t\cdot|r|}} \leq \frac{1}{t}.
\end{gather*}
We use this inequality to bound the difference in the transformed logistic loss and ReLu loss as follows.
\begin{gather}\label{eqn:logistic_to_ReLu}
    |\frac{1}{t}\cdot\Lcal(t\cdot\beta) - \Rcal(\beta)| = \Big|\sum_{i=1}^n \frac{1}{t}\log(1+e^{t\cdot\xb_i^T\beta}) - \max\{0, \xb_i^T\beta\}\Big|
    \leq \sum_{i=1}^n |\frac{1}{t} \log(1+e^{t\cdot\xb_i^T\beta}) - \max\{0, \xb_i^T\beta\}| \leq \frac{n}{t}.
\end{gather}
Therefore, if we set $t = \frac{n}{\epsilon \cdot \Rcal(\beta)}$, then $|\frac{1}{t} \Lcal(t\cdot \beta) - \Rcal(\beta)| \leq \epsilon\Rcal(\beta)$ for all $\beta \in \R^d$. However, we don't know $\Rcal(\beta)$ exactly, and furthermore, it is possible that $\Rcal(\beta) = 0$.  To handle these issues, we first observe that, for fixed dimensions $n$ and $d$, the set of possible input $(\Xb, \yb)$ is finite due to the bounded bit complexity of entries in $\Xb$.  Therefore, with bounded time complexity, we can compute $\Rcal_{\operatorname{min}}(\beta) = \inf_{\Xb, \yb : \Rcal(\beta; \Xb, \yb) > 0} \Rcal(\beta)$, that is, the minimum positive value of $\Rcal(\beta)$ over all possible input of a fixed dimension.  We then set $t = \frac{4n}{\epsilon \cdot \Rcal_{\operatorname{min}}(\beta)}$. By definition $\Rcal_{\min}(\beta)$ is the smallest possible non-zero value of $\Rcal(\beta)$, hence there are two possible cases 1) $\Rcal(\beta) \geq \Rcal_{\min}(\beta)$ and 2) $\Rcal(\beta) = 0$. First, we show that returning $\frac{1}{t}\Lcal(t\cdot\beta)$ gives an $\epsilon$-relative error approximation in the first case:
\begin{gather}\label{eq:relu_case_one}
    \Rcal(\beta) \geq \Rcal_{\min}(\beta) 
    \Rightarrow |\frac{1}{t} \Lcal(t\cdot \beta) - \Rcal(\beta)| \leq \frac{\epsilon}{4}\Rcal_{\min}(\beta) \leq \epsilon\Rcal(\beta).
\end{gather}
Next, we show that if we are in the second case, then $\frac{1}{t}\Lcal(t\cdot\beta)$ is sufficiently smaller than $\Rcal_{\min}(\beta)$ so that we can certify that $\Rcal(\beta) = 0$, in which case we can just return zero to get no error in approximating $\Rcal(\beta)$.
\begin{gather}\label{eq:relu_case_two}
    \Rcal(\beta) = 0
    \Rightarrow |\frac{1}{t} \Lcal(t\cdot \beta)| \leq \frac{\epsilon}{4}\Rcal_{\min}(\beta) 
    \Rightarrow |\frac{1}{t} \Lcal(t\cdot \beta) - \Rcal_{\min}(\beta)| > \frac{\epsilon}{4} \Rcal_{\min}(\beta)
\end{gather}

Above, we showed that $\Lcal(\beta)$ could be used to provide an $\epsilon$-relative error approximation to $\Rcal(\beta)$. Now, we show that if $f(\beta)$ is an $\epsilon$-relative error approximation of $\Lcal(\beta)$, then $g(\beta) = \frac{1}{t}\cdot f(t\cdot\beta)$ is a $3\epsilon$-relative error approximation to $\Rcal(\beta)$ for the value of $t$ defined above. First, we derive the following inequality using eqn. (\ref{eq:relu_case_one}) and the error guarantee of $f(\cdot)$.
\begin{gather*}
    \Big|\frac{1}{t} \Lcal(t \cdot \beta) - \frac{1}{t} f(t \cdot \beta)\Big|
    \leq \epsilon \cdot \frac{1}{t} \cdot \Lcal(t \cdot \beta)
    \leq \epsilon \cdot \left(\Rcal(\beta) + \frac{\epsilon}{4} \Rcal_{\min}(\beta)\right)
\end{gather*}
If $\Rcal(\beta) > 0$, then $\Rcal(\beta) \geq \Rcal_{\min}(\beta)$, and therefore we conclude $|\frac{1}{t} \Lcal(t \cdot \beta) - \frac{1}{t} f(t \cdot \beta)| \leq 2\epsilon\Rcal(\beta)$ from the previous equation.
We now use this result along with eqn. (\ref{eq:relu_case_one}) to prove that $g(\beta)$ is a $3\epsilon$-relative error approximation of $\Rcal(\beta)$ when $\Rcal(\beta) > 0$.
\begin{align*}
    |\frac{1}{t}\cdot f(t\cdot\beta) - \Rcal(\beta)|
    \leq |\frac{1}{t}\cdot f(t\cdot\beta) - \frac{1}{t}\cdot\Lcal(t\cdot \beta)| + |\frac{1}{t}\cdot \Lcal(t\cdot\beta) - \Rcal(\beta)| 
    \leq 3\epsilon \Rcal(\beta).
\end{align*}
Alternatively, if $\Rcal(\beta) = 0$, then:
\begin{gather*}
    \Big|\frac{1}{t} \Lcal(t \cdot \beta) - \frac{1}{t} f(t \cdot \beta)\Big|
    \leq \frac{\epsilon^2}{4}\Rcal_{\min}(\beta) 
    \quad\text{ and }\quad
    |\frac{1}{t} \Lcal(t\cdot \beta)| \leq \frac{\epsilon}{4}\Rcal_{\min}(\beta)\\
    \Rightarrow \Big| \frac{1}{t} f(t\cdot \beta) \Big| \leq \frac{\epsilon}{2}\Rcal_{\min} 
    \Rightarrow \Big|\frac{1}{t} f(t\cdot \beta) - \Rcal_{\min}(\beta) \Big| > \frac{\epsilon}{4}\Rcal(\beta).
\end{gather*}
Therefore, if $\frac{1}{t}f(t\cdot \beta) \leq \frac{\epsilon}{2}\Rcal_{\min}(\beta)$, then $\Rcal(\beta)$ must equal zero, and so we can return zero to get zero approximation error. After adjusting $\epsilon$ by a factor of three in the above proof, we conclude $f(\cdot)$ must use $\Omegatil(\frac{d}{\epsilon^2})$ bits of memory in the worst case.
\end{proof}

\textbf{Proof of Theorem \ref{thm:logistic_loss_upper}}
\begin{proof}

To simplify the notation, let $\sbb = \Xb\beta$ and $\db = (\Xb - \Xbtil)\beta$.  We can then write the difference in the log loss as:
\begin{flalign*}
    |\Lcal(\beta; \Xb) - \Lcal(\beta, \Xbtil)|
    &= \left(\sum_{i=1}^n \log\left( 1 + e^{\xb_i^T\beta}\right) + \frac{\lambda}{2} \|\beta\|_2^2\right) - \left(\sum_{i=1}^n \log\left( 1 + e^{\xbtil_i^T\beta}\right) + \frac{\lambda}{2} \|\beta\|_2^2\right) \\
    &=\left|\sum_{i=1}^n \log \left(\frac{1 + e^{\sbb_i}}{1 + e^{\sbb_i + \db_i}}  \right)\right| \\
    &\leq \left|\sum_{i=1}^n \log \left(\frac{1 + e^{\sbb_i}}{1 + e^{\sbb_i - |\db_i|}}  \right)\right| \\
    &= \left|\sum_{i=1}^n \log \left(\frac{1 + e^{\sbb_i}}{1 + e^{-|\db_i|}e^{\sbb_i}}  \right)\right| \\
    &\leq \left|\sum_{i=1}^n \log \left(\frac{1}{e^{-|\db_i|}}\frac{1 + e^{\sbb_i}}{1 + e^{\sbb_i}}  \right)\right| \\
    &= \sum_{i=1}^n  |\db_i| = \|\db\|_1.
\end{flalign*}

Therefore, we can conclude that $|\Lcal(\beta; \Xb) - \Lcal(\beta; \Xbtil)| \leq \|\db\|_1 \leq \sqrt{n}\|\db\|_2 \leq \sqrt{n} \|\Xbtil - \Xb\|_2 \|\beta\|_2$.

\end{proof}

\textbf{Proof of Theorem \ref{thm:logistic_loss_lower}}
\begin{proof}
To prove the theorems statement, we first consider the case of square matrices $(d = n)$.  In particular, first consider the case where $d = n = 1$, where $\Xb = [x]$ and $\Xbtil = [x + s]$, in which case $\|\Xb - \Xbtil\|_2 = s$.  Then,
\begin{gather*}
     \lim_{x \rightarrow \infty} \Lcal([1]; [x+s]) - \Lcal([1]; [x]) = \lim_{x \rightarrow \infty} \log(1+e^{x + s}) - \log(1 + e^x)  = s
\end{gather*}
Which shows that for $\beta = [1]$ and $x$ with large enough magnitude $\Lcal(\beta; \Xbtil) - \Lcal(\beta; \Xb) = (1-\delta)\|\Xb - \Xbtil\|_2$. Next, let $\Xb = x \cdot \Ib_n$, $\Xbtil = (x+s) \cdot \Ib_n$, and $\beta = \one_n$.  Then for all $i \in [n]$, $\xb_i^T\beta = x$ and $\xb_i^T\beta = x+s$.  Therefore, 
\begin{gather*}
     \lim_{x \rightarrow \infty} \Lcal(\beta; \Xbtil) - \Lcal(\beta; \Xb) = \lim_{x \rightarrow \infty} \sum_{i=1}^n \left[\log(1+e^{x + s}) - \log(1 + e^x)\right] = sn.
\end{gather*}
Since $\|\Xb - \Xbtil\|_2 = \|s \cdot \Ib\|_2 = s$.  $\|\beta\|_2 = \sqrt{n}$,
\begin{gather*}
    \lim_{x \rightarrow \infty} \Lcal(\beta; \Xbtil) - \Lcal(\beta; \Xb)
    = sn = \sqrt{n} \|\Xb - \Xbtil\|_2 \|\beta\|_2
\end{gather*}
Hence, we conclude the statement of the theorem for the case where $d = n$. To conclude the case for $d \geq n$, note that $\sqrt{n}\|\Xb - \Xbbar\|_2\|\beta\|_2$ does not change if we extend $\Xb$ and $\Xbbar$ with columns of zeroes and extend $\beta$ with entries of zero until $\Xb,\Xbbar \in \R^{n \times d}$ and $\R^d$.  This procedure also does not change the loss at $\beta$, hence we conclude the statement of the theorem.
\end{proof}

\end{document}